\documentclass[12pt]{article}

\usepackage{array,booktabs}

\usepackage{color}

\usepackage{amsfonts,enumerate, comment}
\usepackage{apalike}

\usepackage{natbib}

\usepackage{enumerate}
\usepackage{amsmath}
\usepackage{amsthm}
\usepackage{mathtools}                  
\mathtoolsset{showonlyrefs}             
\usepackage{bbm}                        

\newtheorem*{theorem*}{Theorem}

\newtheorem{corollary}{Corollary}[section]

\newtheorem{proposition}{Proposition}
\newtheorem*{proposition*}{Proposition}
\newtheorem{lemma}{Lemma}
\newtheorem*{lemma*}{Lemma}

\usepackage{tikz}
\usetikzlibrary{shapes}
\usetikzlibrary{patterns}

\usepackage{subcaption, ragged2e,lscape}
\usepackage[colorlinks=true,linkcolor=blue,citecolor=blue]{hyperref}
\usepackage[top=1.5in, bottom=1.5in, left=1in, right=1in]{geometry}

\allowdisplaybreaks

\def\xb{\textnormal{\textbf{x}}}

\def\Xb{\textnormal{\textbf{X}}}

\def\Yb{\textnormal{\textbf{Y}}}
\def\Zb{\textnormal{\textbf{Z}}}
\def\1b{\textnormal{\textbf{1}}}

\newcommand{\eps}{\varepsilon}

\def\xb{{\mathbf{x}}}

\def\E{{\rm E}}

\def\supp{{\rm supp}}

\usepackage{amssymb}

\def\1#1{\mathds{1}_{#1}}
\usepackage{dsfont}

\definecolor{lightgrey}{RGB}{197,200,204}

\definecolor{green}{rgb}{0.55, 0.71, 0.0}

\tikzstyle{every picture}+=[remember picture]
\tikzstyle{every node}=[font=\scriptsize] 

\title{Consistency of the $k$-Nearest Neighbor Regressor under Complex Survey Designs}
\author{Caren Hasler}

\begin{document}

\maketitle

\begin{abstract}
We study the consistency of the $k$-nearest neighbor regressor under complex survey designs. While consistency results for this algorithm are well established for independent and identically distributed data, corresponding results for complex survey data are lacking. We show that the $k$-nearest neighbor regressor is consistent under regularity conditions on the sampling design and the distribution of the data. We derive lower bounds for the rate of convergence and show that these bounds exhibit the curse of dimensionality, as in the independent and identically distributed setting. Empirical studies based on simulated and real data illustrate our theoretical findings.
\end{abstract}

\textbf{Keywords}: Complex survey data, Survey sampling, Design-based inference, Superpopulation model, Convergence, Nonparametric regression.

\section{Introduction}

The $k$-Nearest Neighbors ($k$-NN) algorithm is a non-parametric supervised learning method for classification and regression. 
As a classifier, $k$-NN assigns a unit to the most common class among its $k$ nearest neighbors. As a regressor, it predicts a unit's value by averaging the observed values of its $k$ nearest neighbors. To the best of our knowledge, the first formalization of $k$-NN is found in the seminal work of \cite{fix:hod:51}. Since then, interest in $k$-NN has grown rapidly in a wide range of fields, with survey sampling being no exception. 


Statistical consistency is a crucial asymptotic property. An estimator is consistent if it becomes more tightly concentrated around the parameter it intends to estimate as the sample size increases. The literature devoted to the consistency and other asymptotic properties of the $k$-NN classifier and regressor is vast. Indeed, since the seminal work of \cite{sto:77} on the consistency of nonparametric regression estimators, including $k$-NN, the subject has been the focus of numerous studies. To cite only a few of those, \cite{dev:gyo:lug:96,hal:par:sam:08,sam:12,gad:kle:mar:16,dor:gyo:wal:18,gyo:wei:21} present some asymptotic properties of the $k$-NN classifier and \cite{mac:81,dev:gyo:krz:lug:94,bia:cer:guy:10,bia:dev:duj:krz:12} study those of the $k$-NN regressor. The book of \cite{bia:dev:15:nn} is devoted to $k$-NN and discusses its asymptotic properties in depth.

All of the asymptotic results presented in the aforementioned studies are derived in a pure model-based framework, where the data are independent and identically distributed (i.i.d.). In this framework, a model for the data is assumed and the sample is viewed as a collection of draws of i.i.d. random variables. The population is treated as infinite, and the randomness comes from the data-generating process. In a design-based framework, however, the population is treated as finite and fixed. The sample data, or survey data, are selected at random from this population with a complex selection scheme, often without replacement and with unequal probabilities. The randomness then arises from the sampling process. Thus, the i.i.d. assumption does not hold for survey data. Given these differences, the asymptotic results about $k$-NN developed in a pure model-based framework are generally not directly valid in a design-based framework. To be valid for survey data, asymptotic results about $k$-NN need to be tailored to the design-based framework.

Despite the considerable interest in $k$-NN among survey researchers and practitioners, asymptotic results remain limited and, to the best of our knowledge, are available only in two contexts: imputation for nonresponse and spatial mapping in environmental studies.
In the context of imputation for nonresponse, missing values of nonrespondents are imputed using $k$-NN applied to the respondents. Five published papers examine asymptotic properties of $k$-NN imputation in surveys. \cite{che:sha:00} present asymptotic properties of the $1$-NN imputation method for functions of population means (or totals), population distributions, and population quantiles. \cite{che:sha:01} propose two asymptotically unbiased and consistent jackknife variance estimators for $1$-NN imputation. \cite{sha:wan:08} derive asymptotic properties of the point estimator, variance estimator, and confidence interval for population means and quantiles estimated via $1$-NN imputation. These three papers rely on the somewhat unrealistic assumption of a single covariate. 

Two other papers extend these results to the case of multiple covariates. In order to bypass the challenge of multiple covariates, \cite{yan:kim:2019:nni} assume that information contained in the covariates can be summarized by a single scalar variable and apply $1$-NN imputation to that variable. They study the asymptotic properties of the $1$-NN imputation estimator for general population parameters, including population means, proportions, and quantiles. Another solution to address multiple covariates is predictive mean matching. Predictive mean matching is implemented in two steps: 1) a model predicting the survey variable from the covariates is fitted using respondents' data, 2) $1$-NN imputation is applied to the predicted survey variable obtained from the fitted model. \cite{yan:kim:20} study this alternative and establish some asymptotic properties of the predictive mean matching estimator.

In the context of environmental studies, $k$-NN is applied as an interpolator. Specifically, it is used to map continuous populations with finite populations of areas within a design-based framework, allowing estimation of the value of the survey variable at unsampled areas. \cite{fat:mar:pis:pra:22} and \cite{fat:fra:pis:24} establish asymptotic properties of the $k$-NN interpolator in this context. 

The theoretical foundations of $k$-NN in the design-based framework are limited to the two aforementioned contexts. No theoretical foundations of the $k$-NN regressor or classifier have been derived in the design-based framework, despite the growing number of applications. In this paper, we aim to bridge the gap between theory and practice for $k$-NN in survey sampling. We establish the asymptotic properties of the $k$-NN regressor under complex survey designs. We adopt a superpopulation model approach. That is, we assume that the sample is obtained from a finite population with a possibly-complex survey design, as in the design-based approach. In addition, we assume that the population consists of realizations of i.i.d. random variables defined according to a model called the superpopulation model. The analysis is carried out according to both sources of randomness, namely the sampling design and the superpopulation model. In the case of the $k$-NN regressor, this choice seems natural. We elaborate on this choice in Section~\ref{section:notation}. In this paper, we list conditions on the sampling design and on the distribution of the data under which the $k$-NN regressor is consistent. We also provide lower bounds for the rate of convergence of the $k$-NN regressor. Moreover, we show that this estimator suffers from the curse of dimensionality, as in the traditional model-based approach. Finally, we present the results of empirical studies on simulated and real data, which illustrate and empirically support our theoretical findings.

This paper is organized as follows. Section~\ref{section:notation} contains a description of the context and pieces of notation. Section~\ref{section:knn} presents the $k$-NN regressor. The conditions required for consistency are listed and detailed in Section~\ref{section:conditions}. The main results on consistency and rate of convergence are stated in Section~\ref{section:consistency}. The proofs are in the same section, with some technical elements left in the \hyperref[section:propositions]{Appendix}. Section~\ref{section:empirical} presents the empirical studies carried out with simulated and true data. Concluding remarks are given in Section~\ref{section:conclusion}. 

\section{Context and Notation}\label{section:notation}

Let $U_v := \{1, 2, ...., N_v\}$ denote a finite population of size $N_v$. To each element $i$ of $U_v$, we associate a random vector $\left(\Xb_i, Y_i, \Zb_i\right) \in \mathbb{R}^p \times \mathbb{R} \times \mathbb{R}^r$. The covariates $\Xb_i$ are variables used to predict the survey variable $Y_i$. Variables $\Zb_i$ are design variables used to construct the sampling design, see next paragraph. The finite population data $D_{U_v} := {(\Xb_i, Y_i)}_{i \in U_v}$ are assumed to consist of i.i.d. realizations of a random vector $(\Xb, Y)$ generated from a superpopulation model $\xi$. For simplicity, we assume that the covariates and survey variable have bounded supports denoted by $\supp(\Xb)$ and $\supp(Y)$, respectively. Let $\E_\xi$ denote the expectation computed with respect to model $\xi$.

A random sample $S_v$ of size $n_v$ is drawn from $U_v$ using a sampling design $p_v(\cdot)$. The sample data, or survey data, $D_{S_v} := \{\left(\Xb_i, Y_i \right)\}_{i \in S_v}$ are not i.i.d. 
The sample membership indicators are $I_{vi} = \mathds{1}_{i \in S_v}$. For generic units $i$ and $j$ in $U_v$, the first- and second-order inclusion probabilities are $\pi_{vi}$ and $\pi_{vij}$, respectively. We build on the asymptotic framework of \cite{isa:ful:82} and consider that $U_v, v\in \mathbb{N}$ is a sequence of embedded finite populations and the population and sample sizes $N_v$ and $n_v$ grow to infinity as $v$ increases. The associated sequence of samples $S_v$ may not be embedded. Hereafter, we omit subscript $v$ to ease reading whenever it is not confusing to do so. That is, we may write $n$ for $n_v$, $N$ for $N_v$, $\pi_i$ for $\pi_{vi}$, and so on. We will keep in mind that these quantities depend on $v$. Let $\E_p$ denote the expectation computed with respect to sampling design $p$ conditionally on population data $D_{U_v}$. 

In this paper, we consider the problem of estimating the regression function $m(\xb):= \E_\xi \E_p \left[Y \rvert \Xb = \xb\right]$ for $\xb \in \supp(\Xb)$ from survey data $D_{S_v}$ where $\E_\xi \E_p$ is the expectation computed with respect to the distribution of the data and the sampling design. That is, we consider the problem of estimating the mean of $Y$ for given values of $\Xb$. An example is the mean wage of an employee ($Y$) paid by an establishment given the establishment location, its size, and the type of industry in which it is engaged ($\Xb$). Consider $\widehat{m}_n(\xb)$ an estimator of $m(\xb)$ obtained from survey data $D_{S_v}$. We are interested in the convergence of the estimator $\widehat{m}_n(\xb)$. That is, we would like to know if estimator $\widehat{m}_n(\xb)$ is getting closer to regression function $m(\xb)$ as the population and sample grow bigger. More precisely, we say that the sequence $\{\widehat{m}_n\}_{v \in \mathbb{N}}$ \emph{converges in $L^2$} towards regression function $m$ if
\begin{equation} \label{convergenceL2}
\lim_{v \to \infty} \E_\xi \E_p \left[\left\{\widehat{m}_n(\xb) - m(\xb)\right\}^2\right] = 0.
\end{equation}
We say in this case that estimator $\widehat{m}_n(\xb)$ is \emph{$L^2$-consistent}, hereafter shortened \emph{consistent}.

\section{$k$-Nearest Neighbors Estimators}\label{section:knn}

First consider the problem of estimating the superpopulation regression function $m(\xb)= \E_\xi \left[Y \rvert \Xb = \xb\right]$ for $\xb \in \supp(\Xb)$ from population data $D_{U_v}$. This case is unrealistic when using survey data, since some of the variables can only be observed at sample level. Nevertheless, this case is helpful for addressing the estimation from sample data. Let $B(\xb, \rho)$ be the closed ball of radius $\rho$ centered at $\xb$. Formally,
\begin{align}
    B(\xb, \rho) = \left\{\tilde{\xb} \in \mathbb{R}^d : \lVert \tilde{\xb} - \xb  \rVert \leq \rho   \right\},
\end{align}
where $\lVert \cdot  \rVert$ is the Euclidean norm. Consider $\Xb_{(1)}(\xb),\Xb_{(2)}(\xb),\ldots,\Xb_{(N)}(\xb)$ a reordering of $\Xb_1,\Xb_2,\ldots,\Xb_N$ in increasing values of $\lVert \Xb - \xb \rVert$. We denote by $\rho_{k_NU}(\xb)$ the maximum Euclidean distance between $\xb$ and the $k_N$ closest population units. That is, $\rho_{k_NU}(\xb) = \lVert \Xb_{(k_N)}(\xb) - \xb \rVert$. The closed ball of radius $\rho_{k_NU}(\xb)$ centered at $\xb$ is $B\left(\xb, \rho_{k_NU}(\xb)\right)$. The population units $i \in U$ such that $i \in B\left(\xb, \rho_{k_NU}(\xb)\right)$ are the $k_N$ population units closest to $\xb$ or the $k_N$ nearest neighbors of $\xb$ in the population.

Regression function $m(\xb)$ can be estimated using the finite population data $D_{U_v}$ with the finite population $k$-NN estimator
\begin{align}\label{estimator:mN}
  \widehat{m}_N(\xb) &= \frac{1}{k_N} \sum_{i \in U}  \1{i \in B\left(\xb, \rho_{k_NU}(\xb)\right)} Y_i \\
                    &= \left[ \sum_{j \in U}  \1{j \in B(\xb, \rho_{k_NU})} \right]^{-1} \sum_{i \in U}  \1{i \in B(\xb, \rho_{k_NU})} Y_i.
\end{align}
This estimator is the average $Y$-value of the $k_N$ population units closest to $\xb$. It is $L^q$-consistent for $m(\xb)$ for $q \geq 1$ when $k_N \to + \infty$, $k_N/N \to 0$, and $\E_\xi\left(|Y_i|^q\right) < +\infty$. That is,
\begin{align}
   \lim_{v \to \infty}	\E_\xi\left[ \lvert \widehat{m}_N(\xb) - m(\xb)\rvert^q\right] = 0
\end{align}
under these conditions. See Corollary 10.3 of \cite{bia:dev:15:nn}. 

%
%

For complex survey data $D_{S_v}$, the available data do not allow us to compute $\widehat{m}_N(\xb)$. In this work, we consider the problem of estimating the regression function $m(\xb)= \E_\xi \left[Y \rvert \Xb = \xb\right]$ using the $k$-NN algorithm applied on survey data $D_{S_v}$. Consider $\Xb_{(1S)}(\xb)$, $\Xb_{(2S)}(\xb)$,$\ldots$, $\Xb_{(nS)}(\xb)$ a reordering of $\left\{\Xb_i\right\}_{i \in S}$ in increasing value of $\lVert \Xb - \xb \rVert$. We note $B\left(\xb, \rho_{k_nS}(\xb)\right)$ the closed ball of radius $\rho_{k_nS}(\xb)$ centered at $\xb$ where $\rho_{k_nS}(\xb)$ is the maximum Euclidean distance between $\xb$ and the $k_n$ closest sample units. That is, $\rho_{k_nS}(\xb) = \lVert \Xb_{(k_nS)}(\xb) - \xb \rVert$. The sample units $i \in S$ such that $i \in B\left(\xb, \rho_{k_nS}(\xb)\right)$ are the $k_n$ sample units closest to $\xb$ or the $k_n$ nearest neighbors of $\xb$ in the sample. The left panel of Figure~\ref{fig:neighborhoods:partition} illustrates the difference between $B\left(\xb, \rho_{k_nS}(\xb)\right)$ and $B\left(\xb, \rho_{k_NU}(\xb)\right)$ for $k_n = k_N = 4$.

The sample $k_n$-NN estimator is
\begin{align}\label{estimator:mn}
  \widehat{m}_n(\xb) = \left[ \sum_{j \in U}  \1{j \in B\left(\xb, \rho_{k_nS}(\xb)\right)}\frac{I_j}{\pi_j} \right]^{-1} \sum_{i \in U}  \1{i \in B\left(\xb, \rho_{k_nS}(\xb)\right)}\frac{I_i}{\pi_i} Y_i.
\end{align}
This estimator is a survey weighted average $Y$-value of the $k_n$ sample units closest to $\xb$. The right panel of Figure~\ref{fig:neighborhoods:partition} illustrates how this estimator generates a partition of the support of $\Xb$ into polygons. All $\xb$ within a polygon have the same nearest neighbors, and therefore the same estimated regression function $\widehat{m}_n(\xb)$. We denote by $Q_n$ this partition of the support of $\Xb$ into polygons. 

\begin{figure}
\begin{center}
           \begin{tikzpicture}[scale = 0.8]
            every label/.append style = {font=\tiny}
            \tikzstyle{every node}=[font=\scriptsize]
                \draw[help lines, color=gray!40, dashed] (-4,-3) grid (4,3);
                \draw[->] (-4.5,-3)--(4.5,-3) node[right]{$x_1$};
                \draw[->] (-4,-3.5)--(-4,3.5) node[above]{$x_2$};


                \draw (0,0) ellipse (2.17cm and 2.17cm)[color=black, dashed];
                \draw (0,0) ellipse (1.05cm and 1.05cm)[color=black];

                \node [star, star point height=1pt, minimum size=0.1cm, draw, fill = black, inner sep = 0.5pt]
       at (0,0) {};
                \node [draw = none, anchor = west]
       at (0,0) {$\xb$};

                \draw[black]   (0.3,-0.95) circle (1.5pt) node{};
                \draw[black]   (1,1.5) circle (1.5pt) node{};
                \draw[black, fill = black]   (0.75, 0.5) circle (1.5pt) node{};
                \draw[black, fill = black]  (-0.5,0.75) circle (1.5pt) node{};
                \draw[black]  (-0.5,-0.5) circle (1.5pt) node{};

                \draw[black]  (-1.5,1.7) circle (1.5pt) node{};
                \draw[black]  (1.5,-1.3) circle (1.5pt) node{};
                \draw[black, fill = black]  (-1.5,-1.5) circle (1.5pt) node{};
                \draw[black, fill = black]  (1,-1.5) circle (1.5pt) node{};
                \draw[black]  (1.5,-1.75) circle (1.5pt) node{};

                \draw[black]  (-2.5,2.7) circle (1.5pt) node{};
                \draw[black, fill = black]  (2.5,-2.5) circle (1.5pt) node{};
                \draw[black, fill = black]  (-3.5,-2.5) circle (1.5pt) node{};
                \draw[black]  (-3,-2.5) circle (1.5pt) node{};
                \draw[black]  (3.5,-2.5) circle (1.5pt) node{};
            \end{tikzpicture}
            \begin{tikzpicture}[scale = 0.8]
            every label/.append style = {font=\tiny}
            \tikzstyle{every node}=[font=\scriptsize]
                \draw[help lines, color=gray!40, dashed] (-4,-3) grid (4,3);
                \draw[->] (-4.5,-3)--(4.5,-3) node[right]{$x_1$};
                \draw[->] (-4,-3.5)--(-4,3.5) node[above]{$x_2$};

                \draw[black, fill = black]   (0.75, 0.5) circle (1.5pt) node{};
                \draw[black, fill = black]  (-0.5,0.75) circle (1.5pt) node{};
                \draw[black, fill = black]  (-1.5,-1.5) circle (1.5pt) node{};
                \draw[black, fill = black]  (1,-1.5) circle (1.5pt) node{};
                \draw[black, fill = black]  (2.5,-2.5) circle (1.5pt) node{};
                \draw[black, fill = black]  (-3.5,-2.5) circle (1.5pt) node{};

                \node [draw = none, anchor = south] at (-3.5,-2.5) {1};
                \node [draw = none, anchor = south] at (-1.5,-1.5) {2};
                \node [draw = none, anchor = south] at (-0.5,0.75) {3};
                \node [draw = none, anchor = south] at (0.75, 0.5) {4};
                \node [draw = none, anchor = north] at (1,-1.5)    {5};
                \node [draw = none, anchor = south] at (2.5,-2.5)  {6};

                \draw (-2.40, 3) -- (-1.54,-0.80) -- (-4,-1.09);
                \draw (-1.31,-3)  -- (-0.51,-2.23)--(0.01,-3);
                \draw (0.68,-1.15) -- (1.72,2.99);
                \draw (-1.54,-0.80) -- (-0.51,-2.23)--(0.68,-1.15);
                \draw (0.68,-1.15) -- (4,-2.59);

                \node [draw = none, anchor = center] at (-3,1) {1234};
                \node [draw = none, anchor = center] at (-0.5,2) {2345};
                \node [draw = none, anchor = center] at (-2,-2.5) {1235};
                \node [draw = none, anchor = south] at (-0.6,-3.1) {1256};
                \node [draw = none, anchor = south] at (1.5,-3) {2456};
                \node [draw = none, anchor = south] at (3,1.5) {3456};
%
%
%
%
            \end{tikzpicture}
\end{center}
  \caption{Left panel: 15 population units (filled and empty dots), six sample units (filled dots), closed ball of radius $\rho_{4U}(\xb)$ centered at $\xb$ ($B\left(\xb, \rho_{4U}(\xb)\right)$, solid circle), and closed ball of radius $\rho_{4S}(\xb)$ centered at $\xb$ ($B\left(\xb, \rho_{4S}(\xb)\right)$, dashed circle).\\
  Right panel: six sample units (filled dots), partition of $\mathbb{R}^2$ obtained from the 4-nearest neighbors applied to sample units (polygons), labels of four nearest sample units of any point in a polygon (four-figure numbers within the polygons).}\label{fig:neighborhoods:partition}
\end{figure}

The final goal is to show that estimator $\widehat{m}_n(\xb)$ is $L^2$-consistent for regression function $m(\xb)$ and to provide a lower bound for the rate of convergence. We follow the idea of \cite{tot:elt:11:tree} and introduce a hypothetical estimator
\begin{align}\label{estimator:mstar}
  \widehat{m}^*_n(\xb) = \left[ \sum_{j \in U}  \1{j \in B\left(\xb, \rho_{k_nS}(\xb)\right)} \right]^{-1} \sum_{i \in U}  \1{i \in B\left(\xb, \rho_{k_nS}(\xb)\right)} Y_i,
\end{align}
where a population unit $i \in U$ is in $B\left(\xb, \rho_{k_nS}(\xb)\right)$ if it is no further to $\xb$ than the $k_n$ closest sample units. Hence, the indicator in this sum is 1 if and only if a unit is in the set
\begin{align}
  \left\{i \in U ; i \in B\left(\xb, \rho_{k_nS}(\xb)\right)\right\} = \left\{i \in U ; \lVert \Xb_i - \xb \rVert \leq  \lVert \Xb_{(k_nS)}(\xb) - \xb \rVert \right\}.
\end{align}
Note that the size of this set is random and larger than $k_n$. Estimator $\widehat{m}^*_n(\xb)$ computes the average of $Y$-values for the population units $i$ in the ball of radius $\rho_{k_nS}(\xb)$ centered at $\xb$. This estimator cannot be computed based on the sample data $D_{S_v}$. It is however useful for the proof of consistency.

Let us illustrate the different estimators using the left panel of Figure~\ref{fig:neighborhoods:partition}. For four nearest neighbors, the population estimator $\widehat{m}_N(\xb)$ is based on the four population units closest to $\xb$, shown as the units inside the solid circle. The sample estimator $\widehat{m}_n(\xb)$ uses the four sampled units closest to $\xb$, corresponding to the filled dots inside the dashed circle. Finally, the hypothetical estimator $\widehat{m}^*_n(\xb)$ is computed from all population units that lie no farther from $\xb$ than the fourth closest sampled unit, corresponding to all dots, empty and filled, inside the dashed circle.

The general idea of the proof is as follows. In a first step, we show that the hypothetical estimator $\widehat{m}^*_n(\xb)$ is $L^2$-consistent for $m(\xb)$. We also give a lower bound for the rate of convergence. In a second step, we show that the sample estimator $\widehat{m}_n(\xb)$ is $L^2$-design-consistent for the hypothetical estimator $\widehat{m}^*_n(\xb)$. We also give a lower bound for the rate of convergence. In a third and last step, we use the output of the first two steps to show that the sample estimator $\widehat{m}_n(\xb)$ is $L^2$-consistent for $m(\xb)$ and provide a lower bound for the rate of convergence.

\section{Conditions}\label{section:conditions}

We impose several conditions on the population, the regression function, the sampling design, and the sequence $k_n$ for the proof of consistency of the sample estimator $\widehat{m}_n(\xb)$. These conditions are listed below, followed by further details and explanations.

\begin{enumerate}[({C}1):]
    \item\label{condition:y:bounded} There exists a constant $M_1>0$ such that $\max\limits_{i \in U}|y_i|\leq M_1 < +\infty$ with $\xi$-probability one.
    \item\label{condition:lipschitz} There exists a constant $L>0$ such that $|m(\xb)-m(\tilde{\xb})|\leq L \lVert \xb-\tilde{\xb} \rVert $ for all $\xb,\tilde{\xb}$ in $\supp(\Xb)$.
    \item\label{condition:variance:residuals} There exists a constant $\sigma^2>0$ such that $\sup\limits_{\xb \in \supp(\Xb)}\E_{\xi}\left[\left. \left\{\Yb - m(\Xb)\right\}^2 \right| \Xb = \xb\right] \leq \sigma^2$.
    \item\label{condition:density:neighborhoods} There exists a constant $C > 0$ such that
    $$ \sum_{i \in U } \1{i \in B\left(\xb, \rho_{k_nS}(\xb)\right)} \leq C\frac{N}{n}k_n$$ for all $\xb$ in $\supp(\Xb)$ and all $v$ with $\xi$-probability one.
    \item\label{condition:sampling:fraction} The sampling fraction satisfies $\lim\limits_{v \to +\infty} nN^{-1} = f \in (0,1)$ with $\xi$-probability one.
    \item\label{condition:non:informative} The sampling design is non-informative.
    \item\label{condition:pi} There exists a constant $\lambda$ such that $\pi_{i}\geq \lambda > 0$ for all $v$.
    \item\label{condition:pi2} The second order inclusion probabilities satisfy $\max\limits_{i,j\in U,i \neq j} \left|\frac{\pi_{ij}}{\pi_i \pi_j}-1 \right| = O(n^{-1})$ with $\xi$-probability one.
    \item\label{condition:I2}  There exists a constant $M_r$ such that the conditional expectation of the sample membership indicators satisfies $\left|\E_p(I_iI_j|Q_n) - \pi_{ij}\right| \leq M_r n^{-1}$ for all $i,j\in U$ with $p$-probability one and $\xi$-probability one.
    \item\label{condition:kn} $k_n \to + \infty$.
    \item\label{condition:kn:n} $k_n/n \to 0$.
\end{enumerate}

Condition (C\ref{condition:y:bounded}) states that the survey variable is bounded. Condition (C\ref{condition:lipschitz}) states that the regression function is Lipschitz continuous. Condition (C\ref{condition:variance:residuals}) states that the model residuals have bounded conditional variance.

Condition (C\ref{condition:density:neighborhoods}) is about the density of observations in the neighborhoods. It can be rewritten
\begin{align}\label{equation:density:neighborhoods}
    \frac{n}{N} \slash \frac{\sum_{i \in S } \1{i \in B\left(\xb, \rho_{k_nS}(\xb)\right)}}{\sum_{i \in U } \1{i \in B\left(\xb, \rho_{k_nS}(\xb)\right)}}  \leq C
\end{align}
since $k_n = \sum_{i \in S } \1{i \in B\left(\xb, \rho_{k_nS}(\xb)\right)}$. This means that the sampling fraction in the ball of radius $\rho_{k_nS}(\xb)$ centered at $\xb$ is of the same order as the general sampling fraction. This avoids having some neighborhoods becoming overly under- or over-represented in the sample as the population and sample grow. We present cases in which this condition is satisfied or violated in Section~\ref{section:sim:data:condition:density}.

Condition (C\ref{condition:sampling:fraction}) ensures that the sampling fraction does not converge towards 0 or 1. For instance, this condition is satisfied if the sequence of sampling fractions is constant. However, this condition is not satisfied if the sample grows systematically faster than the population, or inversely. For instance, suppose that $n = \sqrt{N}$ (rounded to the nearest integer). In this case, the population grows systematically faster than the sample and the sampling fraction $nN^{-1} = N^{-1/2}$ converges to 0. Inversely, suppose that $n = \exp(-N^{-1})N$ (rounded to the nearest integer). In this case, we have $ 0 < n < N$ and the sampling fraction $nN^{-1} = \exp(-N^{-1})$ converges to 1.

Condition (C\ref{condition:non:informative}) means that the sample indicators $I_i$ and the survey variables $Y_i$ are conditionally independent given the values of the design variables $\{\Zb_i\}_{i \in U}$.

Condition (C\ref{condition:pi}) states that the first order inclusion probabilities are bounded below. This is for instance the case for simple random sampling without replacement (srswor) and cluster sampling when the sequence of sampling fraction is bounded below. For stratified sampling, this condition is satisfied when the number of units selected in each stratum grows at the same rate as the number of population units in the stratum. However, this condition may not be satisfied for stratified sampling design if the number of population units in a stratum grows much faster than the number of selected units in this stratum. For instance, consider a stratum $h$ of size $N_h$ in which we select $n_h$ units with simple random sampling without replacement. Suppose that we have $n_h = O(\sqrt{N_h})$. For such a stratum, there exists positive numbers $M$ and $v_0$ such that $n_h \leq M\sqrt{N_h}$ for all $v\geq v_0$. This means that we have $\pi_i \leq M/\sqrt{N_h}$ for all $v\geq v_0$ and all units $i$ in stratum $h$. If the size of the stratum $N_h$ grows to infinity, the inclusion probabilities $\pi_i$ converge to 0 for all units in that stratum and the inclusion probabilities are not bounded below.

Condition (C\ref{condition:pi2}) is about the dependence of the sample membership indicators. It is satisfied for any sampling design whose sample membership indicators are not too dependent. Indeed, quantity $\left|\frac{\pi_{ij}}{\pi_i \pi_j}-1 \right|$ can be viewed as a measure of dependence between the sample membership indicators of units $i$ and $j$. The stronger the dependence, the bigger this measure. This measure is null for a design with independent sample membership indicators. For instance, this measure is null for Poisson sampling design since $\pi_{ij} = \pi_i \pi_j$. As a result, Condition (C\ref{condition:pi2}) is satisfied for this sampling design. This condition is also satisfied for srswor. For this sampling design, we have $\pi_{ij} = \frac{n(n-1)}{N(N-1)} < \pi_i\pi_j = \frac{n^2}{N^2}$ and
\begin{align}
  \left|\frac{\pi_{ij}}{\pi_i \pi_j}-1 \right| &= 1 - \frac{\pi_{ij}}{\pi_i \pi_j}
                                                = \frac{1}{\pi_i \pi_j}\left(\pi_i \pi_j - \pi_{ij}\right)
                                                = \frac{N^2}{n^2}\left( \frac{n^2}{N^2} - \frac{n(n-1)}{N(N-1)} \right)\\
                                                &= \frac{N}{n}\left( \frac{n}{N} - \frac{n-1}{N-1} \right)
                                                = \frac{N-n}{n(N-1)}
\end{align}
for all $i,j \in U$. This quantity is $O(n^{-1})$.

Condition (C\ref{condition:pi2}) is also satisfied for any stratified sampling designs that are not too highly stratified. Indeed, suppose that units are selected within strata using srswor. The sample size selected in a generic stratum $U_h$ is $n_h$ and the stratum size is $N_h$. For two distinct units $i,j \in U_h, i \neq j$ the first order inclusion probability is $\pi_i = n_h/N_h$ and the second order probabilities
\begin{align}
  \pi_{ij} = \left\{
  \begin{array}{ll}
    \frac{n_h(n_h-1)}{N_h(N_h-1)} & \hbox{if $i,j\in U_h$;} \\
    \frac{n_hn_\ell}{N_hN_\ell} & \hbox{if $i\in U_h,j\in U_\ell, h \neq \ell$.}
  \end{array}
\right.
\end{align}
Similar computations to those for srswor yield
\begin{align}
 \left|\frac{\pi_{ij}}{\pi_i \pi_j}-1 \right| = \left\{
  \begin{array}{ll}
    \frac{N_h-n_h}{n_h(N_h-1)} & \hbox{if $i,j\in U_h$;} \\
    0 & \hbox{if $i\in U_h,j\in U_\ell, h \neq \ell$.}
  \end{array}
\right.
\end{align}
We obtain $\max\limits_{i,j\in U,i \neq j} \left|\frac{\pi_{ij}}{\pi_i \pi_j}-1 \right| = \max\limits_{h} \frac{N_h-n_h}{n_h(N_h-1)}$. When the minimum number $n_h$ of units selected in all stratum increases at the same rate as $n$ does, quantity $\max\limits_{h} \frac{N_h-n_h}{n_h(N_h-1)}$ is $O(n^{-1})$ and Condition (C\ref{condition:pi2}) is satisfied. However, this condition is not stratified if the sequence of stratified sampling design is so highly stratified that the minimum number of units selected in a stratum is not increasing or is increasing much slower than the sample size $n$.

Condition (C\ref{condition:pi2}) also fails to hold for sampling designs whose sample indicators are highly dependent. For instance, this condition is not satisfied for systematic sampling. Indeed, consider $K=N/n$ with $K$ integer. Then $\pi_i = n/N$ and
\begin{align}
  \pi_{ij} = \left\{
  \begin{array}{ll}
    \frac{n}{N} & \hbox{if $i\mod K = j\mod K$} \\
    0 & \hbox{otherwise.}
  \end{array}
\right.
\end{align}
We obtain
\begin{align}
 \left|\frac{\pi_{ij}}{\pi_i \pi_j}-1 \right| = \left\{
  \begin{array}{ll}
    \left|\frac{N}{n}-1\right| & \hbox{if $i\mod K = j\mod K$} \\
    1 & \hbox{otherwise.}
  \end{array}
\right.
\end{align}
and $\max\limits_{i,j\in U,i \neq j} \left|\frac{\pi_{ij}}{\pi_i \pi_j}-1 \right| \geq 1$. This quantity is not $O(n^{-1})$. Condition (C\ref{condition:pi2}) also fails to hold for cluster sampling. Consider that $t$ clusters are selected with equal probabilities from $T$ clusters. Then $\pi_i = t/T$ and
\begin{align}
  \pi_{ij} = \left\{
  \begin{array}{ll}
    \frac{t}{T} & \hbox{if $i$ and $j$ belong to the same cluster,} \\
    \frac{t}{T}\frac{t-1}{T-1} & \hbox{if $i$ and $j$ belong to different clusters.}
  \end{array}
\right.
\end{align}
We obtain
\begin{align}
 \left|\frac{\pi_{ij}}{\pi_i \pi_j}-1 \right| = \left\{
  \begin{array}{ll}
    \frac{T}{t}-1 & \hbox{if $i$ and $j$ belong to the same cluster,} \\
    1-\frac{t-1}{T-1}\frac{T}{t} & \hbox{if $i$ and $j$ belong to different clusters.}
  \end{array}
\right.
\end{align}
and $\max\limits_{i,j\in U,i \neq j} \left|\frac{\pi_{ij}}{\pi_i \pi_j}-1 \right| \geq \frac{T}{t}-1$. This quantity is not $O(n^{-1})$.

Condition (C\ref{condition:I2}) states that knowing partition $Q_n$ provides less and less information about the second order inclusion probabilities as the population and sample increase. Consider the random variable
\begin{align}\label{rij}
  r_{ij} &= \E_p(I_iI_j|Q_n) - \pi_{ij}.
\end{align}
This variable represents the difference in the second order probability when partition $Q_n$ is known versus unknown. Condition~(C\ref{condition:I2}) implies that
\begin{align}
    \E_p\left(\max\limits_{i,j\in U}|r_{ij}|\right)&=O(n^{-1}) \quad \mbox{with $\xi$-probability one,} \label{equation:Ep:rij}\\
    \E_\xi\E_p\left(\max\limits_{i,j\in U}|r_{ij}|\right)&=O(n^{-1}). \label{equation:Exip:rij}
\end{align}
This condition is complicated to study because it is often impossible to compute the conditional expectation of the sample membership indicators $\E_p(I_iI_j|Q_n)$. We study this condition further and present an example in which we are able to compute it in Section~\ref{section:sim:data:I2}.

Conditions (C\ref{condition:kn}) and (C\ref{condition:kn:n}) state that $k_n$ grows to infinity, but not as fast as $n$ does.

\section{Consistency and Rate of Convergence}\label{section:consistency}

\begin{proposition}\label{proposition:consistency}
     Suppose that Conditions (C\ref{condition:y:bounded}) to (C\ref{condition:kn:n}) hold. The sample estimator $\widehat{m}_n(\xb)$ is $L^2$-consistent for $m(\xb)$ and satisfies
     \begin{itemize}
       \item[] If $d=1$,
                $$\E_{\xi}\E_{p}\left[\left\{\widehat{m}_n(\xb) - m(\xb)\right\}^2 \right] = O\left(\frac{1}{k_n} + \frac{k_n}{n} \right),$$
       \item[] and, if $d \geq 2$,
                $$\E_{\xi}\E_{p}\left[ \left\{\widehat{m}_n(\xb) - m(\xb)\right\}^2 \right] = O\left[\frac{1}{k_n} + \left(\frac{k_n}{n}\right)^{2/d} \right],$$
     \end{itemize}
    with $\xi$-probability one.
\end{proposition}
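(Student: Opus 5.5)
I follow the three-step strategy announced at the end of Section~\ref{section:knn}. The decomposition is
\begin{align}
\widehat{m}_n(\xb) - m(\xb) = \left\{\widehat{m}_n(\xb) - \widehat{m}^*_n(\xb)\right\} + \left\{\widehat{m}^*_n(\xb) - m(\xb)\right\}.
\end{align}
Together with $(a+b)^2 \le 2a^2 + 2b^2$, it reduces the claim to bounding $\E_\xi\E_p[\{\widehat{m}^*_n(\xb) - m(\xb)\}^2]$ and $\E_\xi\E_p[\{\widehat{m}_n(\xb) - \widehat{m}^*_n(\xb)\}^2]$ separately. Consistency then follows from (C\ref{condition:kn}) and (C\ref{condition:kn:n}), since both rates tend to zero.

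\textbf{Step 1 (hypothetical estimator).} Write $K^* = \sum_{j\in U}\1{j \in B(\xb,\rho_{k_nS}(\xb))}$. Conditioning on the radius $\rho_{k_nS}(\xb)$ (equivalently on $Q_n$ and the $\Xb$'s) and using non-informativeness (C\ref{condition:non:informative}), I would split $\widehat{m}^*_n(\xb)-m(\xb)$ into two parts:
\begin{itemize}
\item[] a \emph{variance} term $K^{*-1}\sum_i \1{i\in B}\{Y_i - m(\Xb_i)\}$, whose conditional second moment is at most $\sigma^2/K^* \le \sigma^2/k_n$ by (C\ref{condition:variance:residuals}) and $K^*\ge k_n$;
\item[] a \emph{bias} term $K^{*-1}\sum_i \1{i\in B}\{m(\Xb_i)-m(\xb)\}$, bounded by $L\rho_{k_nS}(\xb)$ using (C\ref{condition:lipschitz}).
\end{itemize}
It then remains to bound $\E\rho^2_{k_nS}(\xb)$. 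By (C\ref{condition:density:neighborhoods}), the ball $B(\xb,\rho_{k_nS})$ contains at most $Ck_nN/n$ population units. Hence $\rho_{k_nS}(\xb) \le \rho_{\lceil Ck_nN/n\rceil U}(\xb)$, which is a population nearest-neighbor radius for i.i.d.\ data. By (C\ref{condition:sampling:fraction}), $Ck_nN/n = O(k_n)$. I would then invoke the classical bounds on the $k$-NN distance for i.i.d.\ data with bounded support (\cite{bia:dev:15:nn}, Chapter 2; also Györfi et al.). These give $\E\rho^2_{kU} = O((k/N)^{2/d})$ for $d\ge2$ and $O(k/N)$ in the $d=1$ worst-case integrated form. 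Since $N \asymp n$, this yields the stated bias rates $(k_n/n)^{2/d}$ and $k_n/n$.

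\textbf{Step 2 (design error).} Write $\widehat{m}_n - \widehat{m}^*_n$ as a ratio of Horvitz--Thompson-type sums over the ball. The denominator of $\widehat{m}_n$ is at least $k_n$ because $\pi_i\le1$. I would linearize:
\begin{align}
\widehat{m}_n - \widehat{m}^*_n = \Big[\sum_{j}\1{j\in B}I_j/\pi_j\Big]^{-1}\sum_i \1{i\in B}\frac{I_i}{\pi_i}\{Y_i-\widehat{m}^*_n\}.
\end{align}
The denominator is bounded below by $k_n$, so it suffices to bound $\E_p$ of the squared numerator by $O(K^{*2}/n\,+\,K^*)$. Multiplied by $k_n^{-2}$ and combined with $K^*\le Ck_nN/n$, this gives $O(1/k_n)$ by (C\ref{condition:sampling:fraction}). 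The expansion of the square involves $\E_p(I_iI_j\mid Q_n)$, because the ball itself is sample-dependent. Here I would use (C\ref{condition:I2}) to replace $\E_p(I_iI_j\mid Q_n)$ by $\pi_{ij}+O(n^{-1})$. I would then use (C\ref{condition:pi2}) and (C\ref{condition:pi}) to control $\pi_{ij}/(\pi_i\pi_j)-1 = O(n^{-1})$ for the off-diagonal terms. The diagonal terms contribute $O(K^*/\lambda)$. The weighted residuals $Y_i - \widehat{m}^*_n$ sum to zero over the ball, so the leading $\pi_i\pi_j$ part cancels exactly, and the off-diagonal remainder is $O(K^{*2}M_1^2/n)$ by (C\ref{condition:y:bounded}).

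\textbf{Main obstacle.} The delicate point is Step 2. The ball $B(\xb,\rho_{k_nS}(\xb))$ is determined by the sample, so the indicators $\1{i\in B}$ and $I_i$ are dependent. This is exactly why conditioning on $Q_n$ and condition (C\ref{condition:I2}) are needed. I would first try to condition on $Q_n$ (which fixes the ball), verify that the first-order analogue $\E_p(I_i\mid Q_n)\approx\pi_i$ follows similarly or is not required thanks to the exact cancellation, and carefully check that the uniform $O(n^{-1})$ error, summed over $K^{*2}$ pairs, stays $O(k_n^2/n)\cdot k_n^{-2}=O(1/n)$. That contribution is negligible compared with $1/k_n$. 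Combining Steps 1 and 2 gives the stated rates with $\xi$-probability one.
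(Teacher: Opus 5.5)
Your argument is correct under the paper's conditions. It keeps the paper's architecture: the split through $\widehat{m}^*_n(\xb)$ and $(a+b)^2\le 2a^2+2b^2$. Both steps, however, are carried out differently.

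\textbf{Step 1.} The paper conditions on $Q_n$ and applies Theorem 14.5 of \cite{bia:dev:15:nn} to the random weights $w_i=1/K^*$ ($i\le K^*$). It then uses (C4) to get $\sum_i w_i(i/N)^{2/d}\le (Ck_n/n)^{2/d}$. You instead do the bias--variance split by hand and observe that $\rho_{k_nS}(\xb)=\rho_{K^*U}(\xb)\le\rho_{KU}(\xb)$, with the deterministic index $K=\lceil Ck_nN/n\rceil$. This is the cleaner device. The dominating radius depends only on the i.i.d.\ population, so you never apply an i.i.d.\ theorem conditionally on $Q_n$. Since $Q_n$ is a function of the sampled covariates, conditioning on it breaks the i.i.d.\ structure that Theorem 14.5 presupposes.

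\textbf{Step 2.} The paper splits into $D_1+D_2$, normalizes by the population count $K^*\ge k_n$, and bounds the number of pairs crudely by $NK^*$. It therefore needs (C5) but not (C4). Your exact ratio identity is shorter. Write $B=B(\xb,\rho_{k_nS}(\xb))$, $e_i=Y_i-\widehat{m}^*_n(\xb)$ and $T=\sum_{i\in U}\mathds{1}_{i\in B}(I_i/\pi_i)e_i$. Because $\sum_{i,j\in B}e_ie_j=0$, you get
\[
\E_p\left(T^2\mid Q_n\right)=\sum_{i,j\in B}\Bigl(\frac{\pi_{ij}}{\pi_i\pi_j}-1\Bigr)e_ie_j+\sum_{i,j\in B}\frac{r_{ij}}{\pi_i\pi_j}\,e_ie_j ,
\]
which is $O(K^*+K^{*2}/n)$ by (C1) and (C7)--(C9). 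This also settles your doubt about the first-order analogue: it is just the case $i=j$ of (C9), since $I_i^2=I_i$ and $\pi_{ii}=\pi_i$. The price is that your denominator is bounded below only by $k_n$. You therefore genuinely need (C4) together with (C5) to keep $K^*=O(k_n)$, whereas the paper's Step 2 does not use (C4).

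\textbf{A caveat shared by both proofs.} The classical bound $\E_\xi\rho^2_{KU}=O\{(K/N)^{2/d}\}$ that you invoke is an integrated statement about $\lVert\Xb_{(K)}(\Xb)-\Xb\rVert^2$, with a random query point. The same is true of Theorem 14.5, whose constant $c_d$ comes from that bound. For a fixed $\xb$ you need a local mass condition, such as $P_\xi(\lVert\Xb-\xb\rVert\le r)\ge c\,r^d$ for small $r$. Without it the pointwise rate can fail; with it, even $d=1$ gives $(k_n/n)^2$. The paper rests on the same integrated result, so this is not a weakness of your route relative to the paper. Still, the assumption should be stated if the pointwise claim is to be rigorous.
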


\begin{proof}
We can write
\begin{align}
  \E_{\xi}\E_{p}\left[ \left\{\widehat{m}_n(\xb) - m(\xb)\right\}^2\right]
    &= \E_{\xi}\E_{p}\left[ \left\{\widehat{m}_n(\xb) -\widehat{m}^*_n(\xb) + \widehat{m}^*_n(\xb)  - m(\xb)\right\}^2\right] \\
    &\leq 2\E_{\xi}\E_{p}\left[ \left\{\widehat{m}_n(\xb) -\widehat{m}^*_n(\xb)\right\}^2\right] +  2\E_{\xi}\E_{p}\left[ \left\{\widehat{m}^*_n(\xb)  - m(\xb)\right\}^2\right].
\end{align}
Proposition~\ref{proposition:knn:model:consistency:mstar} in the Appendix tells us that, when conditions (C\ref{condition:y:bounded}) to (C\ref{condition:density:neighborhoods}), and (C\ref{condition:non:informative}) hold, we have $\E_{\xi}\E_{p}\left[\left\{\widehat{m}^*_n(\xb) - m(\xb)\right\}^2 \right] = O\left(\frac{1}{k_n} + \frac{k_n}{n} \right)$ if $d=1$ and 
$\E_{\xi}\E_{p}\left[\left\{\widehat{m}^*_n(\xb) - m(\xb)\right\}^2 \right] = O\left[\frac{1}{k_n} + \left(\frac{k_n}{n}\right)^{2/d} \right]$ if $d \geq 2$.

Moreover, Proposition~\ref{proposition:knn:design:consistency} in the Appendix tells us that, when conditions (C\ref{condition:y:bounded}) to (C\ref{condition:I2}) hold, we have $ \E_{p}\left[ \left\{\widehat{m}^*_n(\xb) - \widehat{m}_n(\xb)\right\}^2\right]$ is $O(k_n^{-1})$ with $\xi$-probability one. As a result, \newline
$\E_{\xi}\E_{p}\left[ \left\{\widehat{m}^*_n(\xb) - \widehat{m}_n(\xb)\right\}^2\right]$ is $O(k_n^{-1})$.

We can conclude that, when conditions (C\ref{condition:y:bounded}) to (C\ref{condition:I2}) hold, if $d=1$,
                $$\E_{\xi}\E_{p}\left[\left\{\widehat{m}_n(\xb) - m(\xb)\right\}^2 \right] = O\left(\frac{1}{k_n} + \frac{k_n}{n} \right),$$
and, if $d \geq 2$,
                $$\E_{\xi}\E_{p}\left[ \left\{\widehat{m}_n(\xb) - m(\xb)\right\}^2 \right] = O\left[\frac{1}{k_n} + \left(\frac{k_n}{n}\right)^{2/d} \right].$$
If, moreover, conditions~(C\ref{condition:kn}) and (C\ref{condition:kn:n}) are satisfied, we can conclude that estimator $\widehat{m}_n(\xb)$ is $L^2$-consistent for $m(\xb)$.
\end{proof}

We can see that the nearest neighbor estimate based on survey data suffers from the curse of dimensionality \citep{bel:61,sto:85}. The lower bound for the rate of convergence decreases as the number of covariates $d$ increases. It is also the case in the case of independent and identically distributed observations, see \cite{bia:dev:15:nn} page 191.

\begin{corollary}\label{corollary}
       Suppose that Conditions (C\ref{condition:y:bounded}) to (C\ref{condition:I2}) hold.
     \begin{itemize}
       \item[] If $d=1$ and $k_n = n^\frac{1}{2}$,
                $$\E_{\xi}\E_{p}\left[\left\{\widehat{m}_n(\xb) - m(\xb)\right\}^2 \right] = O\left(n^{-\frac{1}{2}} \right),$$
        \item[] If $d \geq 2$ and $k_n =M_5^{\frac{d}{d+2}}n^{\frac{2}{d+2}}$ for a positive constant $M_5$ not related to $d$,
                $$\E_{\xi}\E_{p}\left[ \left\{\widehat{m}_n(\xb) - m(\xb)\right\}^2 \right] = O\left(n^{-\frac{2}{d+2}}\right).$$
     \end{itemize}
\end{corollary}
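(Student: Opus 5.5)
The corollary should follow directly from Proposition~\ref{proposition:consistency}. Its proof only uses (C\ref{condition:kn}) and (C\ref{condition:kn:n}) to turn the rate bound into consistency. The rate bounds themselves need only (C\ref{condition:y:bounded}) to (C\ref{condition:I2}), which are exactly the hypotheses of the corollary. The plan is therefore to take those rate bounds as given, substitute the prescribed choices of $k_n$, and balance the two terms. I will also check that the chosen $k_n$ automatically satisfies $k_n\to\infty$ and $k_n/n\to 0$. This is not strictly needed for the rate, but it confirms that the substitution is legitimate and that consistency follows as well. One small caveat is that $k_n$ must be an integer, so I will read $k_n=n^{1/2}$ and $k_n=M_5^{d/(d+2)}n^{2/(d+2)}$ as rounded to the nearest integer. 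Rounding changes $k_n$ by at most a factor tending to $1$ and does not affect the $O(\cdot)$ statements.

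For $d=1$, substituting $k_n=n^{1/2}$ into $O\left(k_n^{-1}+k_n/n\right)$ gives $O\left(n^{-1/2}+n^{-1/2}\right)=O\left(n^{-1/2}\right)$. Clearly $n^{1/2}\to\infty$ and $n^{1/2}/n\to 0$.

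For $d\ge 2$, substitute $k_n=M_5^{d/(d+2)}n^{2/(d+2)}$ into $O\left[k_n^{-1}+(k_n/n)^{2/d}\right]$. The first term is $M_5^{-d/(d+2)}n^{-2/(d+2)}$. For the second, $k_n/n=M_5^{d/(d+2)}n^{-d/(d+2)}$, so
\begin{align}
\left(\frac{k_n}{n}\right)^{2/d}=M_5^{2/(d+2)}n^{-2/(d+2)}.
\end{align}
Both terms are constants times $n^{-2/(d+2)}$, which gives the rate $O\left(n^{-2/(d+2)}\right)$. This choice is the usual bias–variance balance: equating $k_n^{-1}$ with $(k_n/n)^{2/d}$ gives $k_n^{(d+2)/d}\asymp n^{2/d}$, hence $k_n\asymp n^{2/(d+2)}$. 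Again $k_n\to\infty$, and $k_n/n\asymp n^{-d/(d+2)}\to 0$.

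There is no real obstacle here. The only point needing care is that the constants hidden in the $O(\cdot)$ of Proposition~\ref{proposition:consistency} do not depend on $k_n$. Those constants come from $M_1$, $L$, $\sigma^2$, $C$, $\lambda$, $M_r$ and the sampling fraction $f$. This independence is what allows $k_n$ to be chosen as a function of $n$ after the fact. I would verify it by noting that the bounds in Propositions~\ref{proposition:knn:model:consistency:mstar} and~\ref{proposition:knn:design:consistency} in the Appendix are stated uniformly in the sequence $k_n$.
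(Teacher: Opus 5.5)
Your proposal is correct and takes the same route as the paper. The paper gives no separate proof; it substitutes the prescribed $k_n$ into the rate bounds of Proposition~\ref{proposition:consistency}, which use only (C\ref{condition:y:bounded}) to (C\ref{condition:I2}), and remarks that (C\ref{condition:kn}) and (C\ref{condition:kn:n}) hold for these choices, exactly as you do. Your concern that the hidden constants be uniform in $k_n$ is not needed: Proposition~\ref{proposition:consistency} is simply applied to the one fixed sequence $k_n$ named in the corollary, with (C\ref{condition:density:neighborhoods}) read for that sequence.
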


Note that Conditions (C\ref{condition:kn}) and (C\ref{condition:kn:n}) hold for these choices of $k_n$. Corollary~\ref{corollary} is a good example of the curse of dimensionality. Indeed, the lower bound for the rate of convergence increases as the number of covariates $d$ increases. If the goal is to find the minimum sample size to ensure that the error $\E_{\xi}\E_{p}\left[ \left\{\widehat{m}_n(\xb) - m(\xb)\right\}^2 \right]$ is smaller than a preset value $\epsilon$, Corollary~\ref{corollary} allows us to show that if $d \geq 2$ then the sample size is of the order of $(1/\epsilon)^{\frac{d+2}{2}}$. This quantity grows exponentially in $d$.

\section{Empirical Studies}\label{section:empirical}

\subsection{Simulated Data: Condition (C\ref{condition:density:neighborhoods})}\label{section:sim:data:condition:density}
In this section, we present cases in which Condition (C\ref{condition:density:neighborhoods}) is verified or violated based on simulated data. We generate nine embedded populations $U_v, v = 1, \ldots, 9$ of respective size $N = 50, 100, 200, 500, 1000, 5000, 10000, 20000, 50000$. We generate the values $x_i, i \in U_9$ of one covariate $X$ as i.i.d. draws of a random variables distributed as a uniform distribution with minimum 0 and maximum 1. For each population $U_v, v = 1, \ldots, 9$, we select three samples $S_v^i, i = \mbox{pps}, \mbox{srswor}, \mbox{stratified}$ of size $n = 0.4N$ with proportional to size sampling (pps), simple random sampling without replacement (srswor), and stratified sampling (stratified), respectively. The sampling fraction is $f = 0.4$. For proportional to size sampling, the inclusion probabilities are proportional to the values of covariate $X$. A unit with a larger value of covariate $X$ is more likely to be included in the sample than a unit with a smaller value. For stratified sampling, the population is stratified into four strata defined based on the quartiles of the values of covariate $X$. A sample of respective size $0.1n$, $0.2n$, $0.2n$, and $0.5n$ is selected from each of the four stratum with srswor. The sampling fraction within the strata is of approximately 0.16, 0.32, 0.32, and 0.8, respectively. All units within a stratum are equally likely to be included in the sample. The higher the values of covariate $X$ within a stratum, the higher the sampling fraction within this stratum. We consider $k_n = \lfloor n^{1/2} \rfloor$ where $\lfloor \cdot \rfloor$ is the floor function that rounds to the nearest smaller integer. We generate a sequence of values in the support $X$ equally spaced from 0 to 1 with increment of $0.02$. That is, we generate a vector $\xb^o = \left(0, 0.02, 0.04, \ldots, 1\right)^\top$. Then, for each value $x^o_i$ of the sequence, we compute the ratio between the overall sampling fraction $f = 0.4$ to the sampling fraction in the ball of radius $\rho_{k_nS}(x^o_i)$ centered at $x^o_i$. This ratio corresponds to the quantity in the left-hand side of Equation~\eqref{equation:density:neighborhoods}. Figure~\ref{fig:density:neighborhoods} shows the maximum over all values in $\xb^o$ of this ratio for three sampling designs and different population sizes. We can see that the ratio is bounded for simple random sampling without replacement and stratified sampling. Condition (C\ref{condition:density:neighborhoods}) is verified for these two sampling designs. However, the ratio increases with the population and sample size for proportional to size sampling. Condition (C\ref{condition:density:neighborhoods}) is violated for this sampling design. 

\begin{figure}
\centering
\begin{tikzpicture}[x=1pt,y=1pt]
\definecolor{fillColor}{RGB}{255,255,255}
\path[use as bounding box,fill=fillColor,fill opacity=0.00] (0,0) rectangle (361.35,180.67);
\begin{scope}
\path[clip] (  0.00,  0.00) rectangle (361.35,180.67);
\definecolor{drawColor}{RGB}{255,255,255}
\definecolor{fillColor}{RGB}{255,255,255}

\path[draw=drawColor,line width= 0.6pt,line join=round,line cap=round,fill=fillColor] (  0.00,  0.00) rectangle (361.35,180.68);
\end{scope}
\begin{scope}
\path[clip] ( 31.71, 30.69) rectangle (278.83,175.17);
\definecolor{fillColor}{RGB}{255,255,255}

\path[fill=fillColor] ( 31.71, 30.69) rectangle (278.83,175.18);
\definecolor{drawColor}{gray}{0.92}

\path[draw=drawColor,line width= 0.3pt,line join=round] ( 31.71, 41.49) --
	(278.83, 41.49);

\path[draw=drawColor,line width= 0.3pt,line join=round] ( 31.71, 78.64) --
	(278.83, 78.64);

\path[draw=drawColor,line width= 0.3pt,line join=round] ( 31.71,115.78) --
	(278.83,115.78);

\path[draw=drawColor,line width= 0.3pt,line join=round] ( 31.71,152.92) --
	(278.83,152.92);

\path[draw=drawColor,line width= 0.3pt,line join=round] ( 65.21, 30.69) --
	( 65.21,175.17);

\path[draw=drawColor,line width= 0.3pt,line join=round] (110.18, 30.69) --
	(110.18,175.17);

\path[draw=drawColor,line width= 0.3pt,line join=round] (155.16, 30.69) --
	(155.16,175.17);

\path[draw=drawColor,line width= 0.3pt,line join=round] (200.13, 30.69) --
	(200.13,175.17);

\path[draw=drawColor,line width= 0.3pt,line join=round] (245.11, 30.69) --
	(245.11,175.17);

\path[draw=drawColor,line width= 0.6pt,line join=round] ( 31.71, 60.06) --
	(278.83, 60.06);

\path[draw=drawColor,line width= 0.6pt,line join=round] ( 31.71, 97.21) --
	(278.83, 97.21);

\path[draw=drawColor,line width= 0.6pt,line join=round] ( 31.71,134.35) --
	(278.83,134.35);

\path[draw=drawColor,line width= 0.6pt,line join=round] ( 31.71,171.49) --
	(278.83,171.49);

\path[draw=drawColor,line width= 0.6pt,line join=round] ( 42.72, 30.69) --
	( 42.72,175.17);

\path[draw=drawColor,line width= 0.6pt,line join=round] ( 87.70, 30.69) --
	( 87.70,175.17);

\path[draw=drawColor,line width= 0.6pt,line join=round] (132.67, 30.69) --
	(132.67,175.17);

\path[draw=drawColor,line width= 0.6pt,line join=round] (177.65, 30.69) --
	(177.65,175.17);

\path[draw=drawColor,line width= 0.6pt,line join=round] (222.62, 30.69) --
	(222.62,175.17);

\path[draw=drawColor,line width= 0.6pt,line join=round] (267.60, 30.69) --
	(267.60,175.17);
\definecolor{drawColor}{RGB}{0,0,0}

\path[draw=drawColor,line width= 0.6pt,line join=round] ( 42.94, 50.16) --
	( 43.17, 56.76) --
	( 43.62, 52.64) --
	( 44.97, 59.71) --
	( 47.22, 79.63) --
	( 65.21, 96.87) --
	( 87.70,107.03) --
	(132.67,129.76) --
	(267.60,168.61);

\path[draw=drawColor,line width= 0.6pt,dash pattern=on 2pt off 2pt ,line join=round] ( 42.94, 39.02) --
	( 43.17, 46.03) --
	( 43.62, 47.07) --
	( 44.97, 42.38) --
	( 47.22, 38.52) --
	( 65.21, 40.03) --
	( 87.70, 38.57) --
	(132.67, 38.11) --
	(267.60, 37.25);

\path[draw=drawColor,line width= 0.6pt,dash pattern=on 4pt off 2pt ,line join=round] ( 42.94, 50.16) --
	( 43.17, 50.99) --
	( 43.62, 52.64) --
	( 44.97, 58.65) --
	( 47.22, 58.33) --
	( 65.21, 57.81) --
	( 87.70, 60.10) --
	(132.67, 59.81) --
	(267.60, 58.92);
\definecolor{drawColor}{gray}{0.20}

\path[draw=drawColor,line width= 0.6pt,line join=round,line cap=round] ( 31.71, 30.69) rectangle (278.83,175.18);
\end{scope}
\begin{scope}
\path[clip] (  0.00,  0.00) rectangle (361.35,180.67);
\definecolor{drawColor}{gray}{0.30}

\node[text=drawColor,anchor=base east,inner sep=0pt, outer sep=0pt, scale=  0.88] at ( 26.76, 57.03) {3};

\node[text=drawColor,anchor=base east,inner sep=0pt, outer sep=0pt, scale=  0.88] at ( 26.76, 94.18) {6};

\node[text=drawColor,anchor=base east,inner sep=0pt, outer sep=0pt, scale=  0.88] at ( 26.76,131.32) {9};

\node[text=drawColor,anchor=base east,inner sep=0pt, outer sep=0pt, scale=  0.88] at ( 26.76,168.46) {12};
\end{scope}
\begin{scope}
\path[clip] (  0.00,  0.00) rectangle (361.35,180.67);
\definecolor{drawColor}{gray}{0.20}

\path[draw=drawColor,line width= 0.6pt,line join=round] ( 28.96, 60.06) --
	( 31.71, 60.06);

\path[draw=drawColor,line width= 0.6pt,line join=round] ( 28.96, 97.21) --
	( 31.71, 97.21);

\path[draw=drawColor,line width= 0.6pt,line join=round] ( 28.96,134.35) --
	( 31.71,134.35);

\path[draw=drawColor,line width= 0.6pt,line join=round] ( 28.96,171.49) --
	( 31.71,171.49);
\end{scope}
\begin{scope}
\path[clip] (  0.00,  0.00) rectangle (361.35,180.67);
\definecolor{drawColor}{gray}{0.20}

\path[draw=drawColor,line width= 0.6pt,line join=round] ( 42.72, 27.94) --
	( 42.72, 30.69);

\path[draw=drawColor,line width= 0.6pt,line join=round] ( 87.70, 27.94) --
	( 87.70, 30.69);

\path[draw=drawColor,line width= 0.6pt,line join=round] (132.67, 27.94) --
	(132.67, 30.69);

\path[draw=drawColor,line width= 0.6pt,line join=round] (177.65, 27.94) --
	(177.65, 30.69);

\path[draw=drawColor,line width= 0.6pt,line join=round] (222.62, 27.94) --
	(222.62, 30.69);

\path[draw=drawColor,line width= 0.6pt,line join=round] (267.60, 27.94) --
	(267.60, 30.69);
\end{scope}
\begin{scope}
\path[clip] (  0.00,  0.00) rectangle (361.35,180.67);
\definecolor{drawColor}{gray}{0.30}

\node[text=drawColor,anchor=base,inner sep=0pt, outer sep=0pt, scale=  0.88] at ( 42.72, 19.68) {0};

\node[text=drawColor,anchor=base,inner sep=0pt, outer sep=0pt, scale=  0.88] at ( 87.70, 19.68) {10000};

\node[text=drawColor,anchor=base,inner sep=0pt, outer sep=0pt, scale=  0.88] at (132.67, 19.68) {20000};

\node[text=drawColor,anchor=base,inner sep=0pt, outer sep=0pt, scale=  0.88] at (177.65, 19.68) {30000};

\node[text=drawColor,anchor=base,inner sep=0pt, outer sep=0pt, scale=  0.88] at (222.62, 19.68) {40000};

\node[text=drawColor,anchor=base,inner sep=0pt, outer sep=0pt, scale=  0.88] at (267.60, 19.68) {50000};
\end{scope}
\begin{scope}
\path[clip] (  0.00,  0.00) rectangle (361.35,180.67);
\definecolor{drawColor}{RGB}{0,0,0}

\node[text=drawColor,anchor=base,inner sep=0pt, outer sep=0pt, scale=  1.10] at (155.27,  7.64) {N};
\end{scope}
\begin{scope}
\path[clip] (  0.00,  0.00) rectangle (361.35,180.67);
\definecolor{drawColor}{RGB}{0,0,0}

\node[text=drawColor,rotate= 90.00,anchor=base,inner sep=0pt, outer sep=0pt, scale=  1.10] at ( 13.08,102.93) {Max ratio $f / f(x_i^o)$ };
\end{scope}
\begin{scope}
\path[clip] (  0.00,  0.00) rectangle (361.35,180.67);
\definecolor{fillColor}{RGB}{255,255,255}

\path[fill=fillColor] (289.83, 68.58) rectangle (355.85,137.28);
\end{scope}
\begin{scope}
\path[clip] (  0.00,  0.00) rectangle (361.35,180.67);
\definecolor{drawColor}{RGB}{0,0,0}

\node[text=drawColor,anchor=base west,inner sep=0pt, outer sep=0pt, scale=  1.00] at (295.33,123.92) {N};
\end{scope}
\begin{scope}
\path[clip] (  0.00,  0.00) rectangle (361.35,180.67);
\definecolor{fillColor}{RGB}{255,255,255}

\path[fill=fillColor] (295.33,102.99) rectangle (309.78,117.45);
\end{scope}
\begin{scope}
\path[clip] (  0.00,  0.00) rectangle (361.35,180.67);
\definecolor{drawColor}{RGB}{0,0,0}

\path[draw=drawColor,line width= 0.6pt,line join=round] (296.78,110.22) -- (308.34,110.22);
\end{scope}
\begin{scope}
\path[clip] (  0.00,  0.00) rectangle (361.35,180.67);
\definecolor{fillColor}{RGB}{255,255,255}

\path[fill=fillColor] (295.33, 88.54) rectangle (309.78,102.99);
\end{scope}
\begin{scope}
\path[clip] (  0.00,  0.00) rectangle (361.35,180.67);
\definecolor{drawColor}{RGB}{0,0,0}

\path[draw=drawColor,line width= 0.6pt,dash pattern=on 2pt off 2pt ,line join=round] (296.78, 95.76) -- (308.34, 95.76);
\end{scope}
\begin{scope}
\path[clip] (  0.00,  0.00) rectangle (361.35,180.67);
\definecolor{fillColor}{RGB}{255,255,255}

\path[fill=fillColor] (295.33, 74.08) rectangle (309.78, 88.54);
\end{scope}
\begin{scope}
\path[clip] (  0.00,  0.00) rectangle (361.35,180.67);
\definecolor{drawColor}{RGB}{0,0,0}

\path[draw=drawColor,line width= 0.6pt,dash pattern=on 4pt off 2pt ,line join=round] (296.78, 81.31) -- (308.34, 81.31);
\end{scope}
\begin{scope}
\path[clip] (  0.00,  0.00) rectangle (361.35,180.67);
\definecolor{drawColor}{RGB}{0,0,0}

\node[text=drawColor,anchor=base west,inner sep=0pt, outer sep=0pt, scale=  0.90] at (315.28,107.12) {pps};
\end{scope}
\begin{scope}
\path[clip] (  0.00,  0.00) rectangle (361.35,180.67);
\definecolor{drawColor}{RGB}{0,0,0}

\node[text=drawColor,anchor=base west,inner sep=0pt, outer sep=0pt, scale=  0.90] at (315.28, 92.67) {srswor};
\end{scope}
\begin{scope}
\path[clip] (  0.00,  0.00) rectangle (361.35,180.67);
\definecolor{drawColor}{RGB}{0,0,0}

\node[text=drawColor,anchor=base west,inner sep=0pt, outer sep=0pt, scale=  0.90] at (315.28, 78.21) {stratified};
\end{scope}
\end{tikzpicture}
\caption{Simulated data: maximum value of the ratio of the overall sampling fraction $f$ to the sampling fraction $f(x^o_i)$ in the ball of radius $\rho_{k_nS}(x^o_i)$ centered at $x^o_i$ over a sequence of values $\xb^o$ in the support of the covariate. Different population sizes $N$ and sampling designs are considered. The sampling designs are proportional to size sampling (pps), simple random sampling without replacement (srswor), and stratified sampling (stratified).} 
\label{fig:density:neighborhoods}
\end{figure}
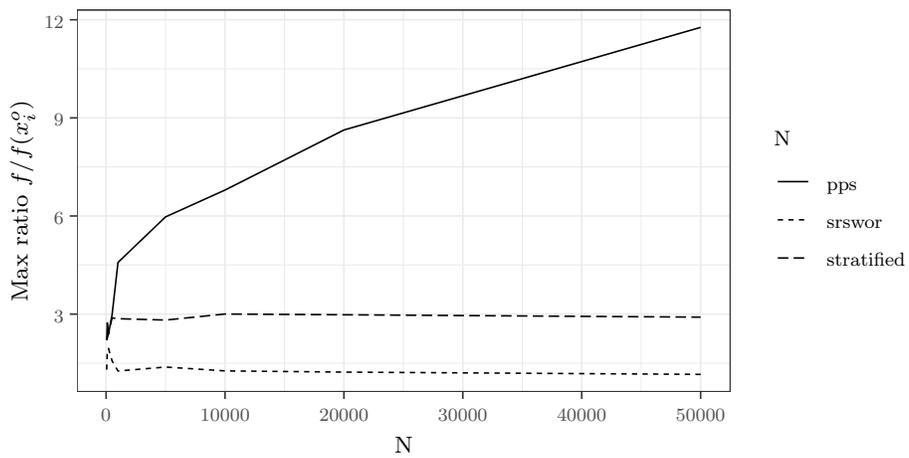

\subsection{Simulated Data: Condition (C\ref{condition:I2})}\label{section:sim:data:I2}
Condition (C\ref{condition:I2}) is complicated to study empirically because it is often impossible to compute the conditional expectation of the sample membership indicators $\E_p(I_iI_j|Q_n)$. We present here a limited example in which we are able to compute this expectation. We generate nine embedded populations $U_v, v = 1, \ldots, 9$ of respective size $N$ ranging from 10 to 50 in increments of 5. We generate the values $x_i, i \in U_9$ of one covariate $X$ as i.i.d. draws of a random variables distributed as a uniform distribution with minimum 0 and maximum 1. For each population $U_v, v = 1, \ldots, 9$, we select a sample of size $n = 0.4N$ using simple random sampling without replacement. The sampling fraction is $f = 0.4$. We consider $k_n = \lfloor n^{1/2} \rfloor$. Then we compute the conditional expectation of the sample membership indicators $\E_p(I_iI_j|Q_n)$ and the values of $r_{ij} = \E_p(I_iI_j|Q_n) - \pi_{ij}$ for all $i,j \in U_v$. The average absolute value of $r_{ij}, i,j\in U_v$ for the nine generated populations $U_v, v = 1, \ldots, 9$ is shown in Figure~\ref{fig:rij:abs}. We see that the absolute value of $r_{ij}$ decreases as the sample size increases. This seems to indicate that Condition (C\ref{condition:I2}) is verified or at least does not provide evidence that this condition fails to hold.

\begin{figure}
\centering
\begin{tikzpicture}[x=1pt,y=1pt]
\definecolor{fillColor}{RGB}{255,255,255}
\path[use as bounding box,fill=fillColor,fill opacity=0.00] (0,0) rectangle (361.35,180.67);
\begin{scope}
\path[clip] (  0.00,  0.00) rectangle (361.35,180.67);
\definecolor{drawColor}{RGB}{255,255,255}
\definecolor{fillColor}{RGB}{255,255,255}

\path[draw=drawColor,line width= 0.6pt,line join=round,line cap=round,fill=fillColor] (  0.00,  0.00) rectangle (361.35,180.68);
\end{scope}
\begin{scope}
\path[clip] ( 42.95, 30.69) rectangle (355.85,175.17);
\definecolor{fillColor}{RGB}{255,255,255}

\path[fill=fillColor] ( 42.95, 30.69) rectangle (355.85,175.18);
\definecolor{drawColor}{gray}{0.92}

\path[draw=drawColor,line width= 0.3pt,line join=round] ( 42.95, 45.60) --
	(355.85, 45.60);

\path[draw=drawColor,line width= 0.3pt,line join=round] ( 42.95, 91.46) --
	(355.85, 91.46);

\path[draw=drawColor,line width= 0.3pt,line join=round] ( 42.95,137.33) --
	(355.85,137.33);

\path[draw=drawColor,line width= 0.3pt,line join=round] ( 92.73, 30.69) --
	( 92.73,175.17);

\path[draw=drawColor,line width= 0.3pt,line join=round] (163.85, 30.69) --
	(163.85,175.17);

\path[draw=drawColor,line width= 0.3pt,line join=round] (234.96, 30.69) --
	(234.96,175.17);

\path[draw=drawColor,line width= 0.3pt,line join=round] (306.07, 30.69) --
	(306.07,175.17);

\path[draw=drawColor,line width= 0.6pt,line join=round] ( 42.95, 68.53) --
	(355.85, 68.53);

\path[draw=drawColor,line width= 0.6pt,line join=round] ( 42.95,114.40) --
	(355.85,114.40);

\path[draw=drawColor,line width= 0.6pt,line join=round] ( 42.95,160.27) --
	(355.85,160.27);

\path[draw=drawColor,line width= 0.6pt,line join=round] ( 57.18, 30.69) --
	( 57.18,175.17);

\path[draw=drawColor,line width= 0.6pt,line join=round] (128.29, 30.69) --
	(128.29,175.17);

\path[draw=drawColor,line width= 0.6pt,line join=round] (199.40, 30.69) --
	(199.40,175.17);

\path[draw=drawColor,line width= 0.6pt,line join=round] (270.51, 30.69) --
	(270.51,175.17);

\path[draw=drawColor,line width= 0.6pt,line join=round] (341.63, 30.69) --
	(341.63,175.17);
\definecolor{drawColor}{RGB}{0,0,0}

\path[draw=drawColor,line width= 0.6pt,line join=round] ( 57.18,168.61) --
	( 92.73,120.95) --
	(128.29, 93.36) --
	(163.85, 75.59) --
	(199.40, 63.22) --
	(234.96, 54.14) --
	(270.51, 47.19) --
	(306.07, 41.70) --
	(341.63, 37.25);
\definecolor{drawColor}{gray}{0.20}

\path[draw=drawColor,line width= 0.6pt,line join=round,line cap=round] ( 42.95, 30.69) rectangle (355.85,175.18);
\end{scope}
\begin{scope}
\path[clip] (  0.00,  0.00) rectangle (361.35,180.67);
\definecolor{drawColor}{gray}{0.30}

\node[text=drawColor,anchor=base east,inner sep=0pt, outer sep=0pt, scale=  0.88] at ( 38.00, 65.50) {0.272};

\node[text=drawColor,anchor=base east,inner sep=0pt, outer sep=0pt, scale=  0.88] at ( 38.00,111.37) {0.274};

\node[text=drawColor,anchor=base east,inner sep=0pt, outer sep=0pt, scale=  0.88] at ( 38.00,157.24) {0.276};
\end{scope}
\begin{scope}
\path[clip] (  0.00,  0.00) rectangle (361.35,180.67);
\definecolor{drawColor}{gray}{0.20}

\path[draw=drawColor,line width= 0.6pt,line join=round] ( 40.20, 68.53) --
	( 42.95, 68.53);

\path[draw=drawColor,line width= 0.6pt,line join=round] ( 40.20,114.40) --
	( 42.95,114.40);

\path[draw=drawColor,line width= 0.6pt,line join=round] ( 40.20,160.27) --
	( 42.95,160.27);
\end{scope}
\begin{scope}
\path[clip] (  0.00,  0.00) rectangle (361.35,180.67);
\definecolor{drawColor}{gray}{0.20}

\path[draw=drawColor,line width= 0.6pt,line join=round] ( 57.18, 27.94) --
	( 57.18, 30.69);

\path[draw=drawColor,line width= 0.6pt,line join=round] (128.29, 27.94) --
	(128.29, 30.69);

\path[draw=drawColor,line width= 0.6pt,line join=round] (199.40, 27.94) --
	(199.40, 30.69);

\path[draw=drawColor,line width= 0.6pt,line join=round] (270.51, 27.94) --
	(270.51, 30.69);

\path[draw=drawColor,line width= 0.6pt,line join=round] (341.63, 27.94) --
	(341.63, 30.69);
\end{scope}
\begin{scope}
\path[clip] (  0.00,  0.00) rectangle (361.35,180.67);
\definecolor{drawColor}{gray}{0.30}

\node[text=drawColor,anchor=base,inner sep=0pt, outer sep=0pt, scale=  0.88] at ( 57.18, 19.68) {10};

\node[text=drawColor,anchor=base,inner sep=0pt, outer sep=0pt, scale=  0.88] at (128.29, 19.68) {20};

\node[text=drawColor,anchor=base,inner sep=0pt, outer sep=0pt, scale=  0.88] at (199.40, 19.68) {30};

\node[text=drawColor,anchor=base,inner sep=0pt, outer sep=0pt, scale=  0.88] at (270.51, 19.68) {40};

\node[text=drawColor,anchor=base,inner sep=0pt, outer sep=0pt, scale=  0.88] at (341.63, 19.68) {50};
\end{scope}
\begin{scope}
\path[clip] (  0.00,  0.00) rectangle (361.35,180.67);
\definecolor{drawColor}{RGB}{0,0,0}

\node[text=drawColor,anchor=base,inner sep=0pt, outer sep=0pt, scale=  1.10] at (199.40,  7.64) {N};
\end{scope}
\begin{scope}
\path[clip] (  0.00,  0.00) rectangle (361.35,180.67);
\definecolor{drawColor}{RGB}{0,0,0}

\node[text=drawColor,rotate= 90.00,anchor=base,inner sep=0pt, outer sep=0pt, scale=  1.10] at ( 13.08,102.93) {Value};
\end{scope}
\end{tikzpicture}
\caption{Simulated data: average value of $|r_{ij}|$ for different population sizes N.}
\label{fig:rij:abs}
\end{figure}
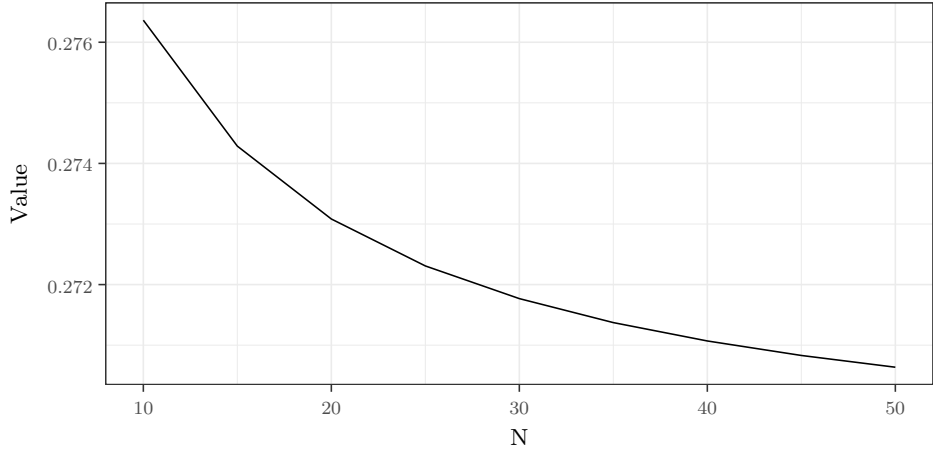

\subsection{Simulated Data: Asymptotics and Consistency}\label{section:sim:data:consistency}

We study the asymptotic behavior of the sample $k_n$-nearest neighbors estimator $\widehat{m}_n$ defined in Equation~\eqref{estimator:mn} and illustrate the result stated in Proposition~\ref{proposition:consistency} using a Monte Carlo simulation study on simulated data. We generate nine embedded populations $U_v, v = 1, \ldots, 9$ of respective size $N = 50, 100, 200, 500, 1000, 5000,$ $10000, 20000, 50000$. We generate the values $x_i, i \in U_9$ of one covariate $X$ as i.i.d. draws of a random variable distributed as a uniform distribution with minimum 0 and maximum 1. Then we generate the values $y_i, i \in U_9$ of the response as follows
\begin{align}
  y_i &= m(x_i) + \eps_i,
\end{align}
where $m(x) = 2x + \sin(2\cdot3.14\cdot x)$ and $\eps_i, i \in U_9$, are draws of i.i.d. random variables distributed as a normal distribution with mean 0 and standard deviation 0.5. Then we create a vector $\xb^o$ of 10 equally spaced values from the smallest to the largest value of $x_i, i \in U_9$. These values lie in the support of the covariate and are the points at which we will estimate regression function $m$. 

Then we run $L = 1000$ simulations for each population $U_v$ as explained in what follows. For a simulation run $\ell= 1, \ldots, L$, we select a sample of size $n = 0.2N$ using simple random sampling without replacement. We set $k_n = \lfloor n^{1/2}\rfloor$. Then we compute the sample $k_n$-nearest neighbors estimator $\widehat{m}_n(x^o_i,\ell)$ defined in Equation~\eqref{estimator:mn} for each of the ten values $x^o_i, i \in 1, \ldots 10$. At the end of the simulation runs, we have computed $L = 1000$ estimates $\widehat{m}_n (x^o_i,\ell)$ of $m(x^o_i)$ for each $x^o_i, i = 1, \ldots, 10$.

For each $x^o_i, i = 1, \ldots, 10$, we compute the Monte Carlo Mean Squared Error (MSE) of $\widehat{m}_n(x^o_i)$
\begin{align}
  \frac{1}{L} \sum_{\ell = 1}^{L} \left\{ \widehat{m}_n (x^o_i,\ell) - m(x^o_i) \right\}^2.
\end{align} 
This is an estimate of $\E_{p}\left[\left\{\widehat{m}_n(x^o_i) - m(x^o_i)\right\}^2 \right]$. Figure~\ref{fig:sim:consistency} shows the boxplots of the ten values of the MSE of $\widehat{m}_n(x^o_i)$, $i = 1, \ldots, 10$, for all nine populations. The dashed line is $\frac{1}{k_n} + \frac{k_n}{n}$ multiplied by an adjustment factor of 2.2 to fit the scale of the boxplots. First, we can see that the MSE of $\widehat{m}_n$ converges towards 0 as the population size increases. Second, we can see that $\frac{1}{k_n} + \frac{k_n}{n}$ and the MSE of $\widehat{m}_n$ seem to have a rate of convergence of the same order. These elements illustrate Proposition~\ref{proposition:consistency}.  

\begin{figure}
\centering
\input{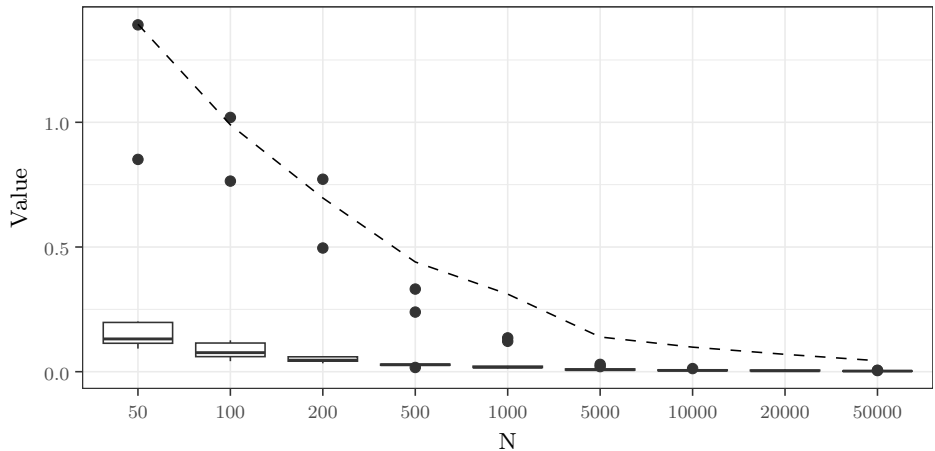}
\caption{Simulated data: boxplots of the value of the MSE of the sample $k_n$-nearest neighbors estimator $\widehat{m}_n(x^o_i)$, $i = 1, \ldots, 10$ for nine populations of respective size N. The dashed line is the value of $\frac{1}{k_n} + \frac{k_n}{n}$ for the nine populations (multiplied by 2.2 for graphical reasons).}
\label{fig:sim:consistency}
\end{figure}

\subsection{Wine Data: Asymptotics and Consistency}\label{section:wine:data}

We consider the White Wine data \citep{wine_quality} available here \newline 
\href{https://archive.ics.uci.edu/dataset/186/wine+quality}{https://archive.ics.uci.edu/dataset/186/wine+quality}.\newline
The dataset contains information about 4898 white variants of the Portuguese "Vinho Verde" wine. The goal is to predict wine quality based on physicochemical tests. The wine quality is determined via a score from 0 (very bad) to 10 (excellent) provided by at least three sensory assessors. The potential predictors include 11 variables based on physicochemical tests, namely fixed acidity, volatile acidity, citric acid, residual sugar, chlorides, free sulfur dioxide, total sulfur dioxide, density, pH, sulphates, and alcohol. An inspection of the dataset reveals some high multicollinearity between the predictors. 
For instance, the correlation between residual sugar and density is 0.84, that between density and alcohol is -0.78. The variance inflation factor of the linear regression of the quality on the 11 physicochemical measurements is about 28 for variable density. This value is very high and indicates the presence of substantial multicollinearity. The variance inflation factors of the linear regression of the quality on the 10 physicochemical measurements excluding density are between 1.06 and 2.15, which seems to indicate that there is no multicollinearity when density is removed. As a result, we exclude density and predict the wine quality based on the 10 remaining physicochemical measurements (10 covariates). 

We generate five embedded populations $U_v, v = 1, \ldots, 5$ containing the first $N = 100, 500$, $1000, 2000, 4898$ wine variants of the White Wine data, respectively. With true data, regression function $m$ is unknown. We estimate it using the finite population $k$ nearest neighbor estimator $\widehat{m}_N$ defined in Equation~\eqref{estimator:mN} applied to the population of 4898 wine variants. We know that this estimator is $L^p$-consistent for regression function $m$ under mild conditions. We set $k_n = \lfloor n^{1/2}\rfloor$, where $n = \lfloor 0.2N \rfloor$ is the sample size selected in population $U_v$, see below. 

We create a set of values in the support of the covariates at which we will estimate the regression function as explained in what follows. For each covariate $X_i$, we consider three values in the grid: the minimum of the observed values for that covariate, the maximum, and the mid-point. A grid of values in the support of the covariates is obtained by considering all possible combinations of these three values per covariate. This creates $3^{10} = 59049$ vectors of values in the support of the covariates. For computational reasons, we select at random 100 of these vectors. We denote by $\xb^o_i$ the values of the covariates for the $i$-th vector, $i = 1, \ldots, 100$. We estimate the regression function at these 100 vectors of values.   

We run $L = 1000$ simulations as explained in Section~\ref{section:sim:data:consistency}. For each $\xb^o_i, i = 1, \ldots, 100$, we compute the Monte Carlo Mean Squared Error (MSE) of $\widehat{m}_n(\xb^o_i)$
\begin{align}
  \frac{1}{L} \sum_{\ell = 1}^{L} \left\{ \widehat{m}_n (\xb^o_i,\ell) - \widehat{m}_N(\xb^o_i) \right\}^2.
\end{align} 
This is an estimate of $\E_{p}\left[\left\{\widehat{m}_n(\xb^o_i) - m(\xb^o_i)\right\}^2 \right]$. Figure~\ref{fig:wn:consistency} shows the boxplots of the 100 values of the MSE of $\widehat{m}_n(\xb^o_i)$, $i = 1, \ldots, 100$, for all five populations. The dashed line is $\frac{1}{k_n} + \left(\frac{k_n}{n}\right)^{2/10}$ multiplied by an adjustment factor of 4.5 to fit the scale of the boxplots. First, we can see that the MSE of $\widehat{m}_n$ converges towards 0 as the population size increases. Second, we can see that $\frac{1}{k_n} + \left(\frac{k_n}{n}\right)^{2/10}$ and the MSE of $\widehat{m}_n$ seem to have a rate of convergence of the same order. These elements illustrate Proposition~\ref{proposition:consistency}.

\begin{figure}
\centering
\begin{tikzpicture}[x=1pt,y=1pt]
\definecolor{fillColor}{RGB}{255,255,255}
\path[use as bounding box,fill=fillColor,fill opacity=0.00] (0,0) rectangle (361.35,180.67);
\begin{scope}
\path[clip] (  0.00,  0.00) rectangle (361.35,180.67);
\definecolor{drawColor}{RGB}{255,255,255}
\definecolor{fillColor}{RGB}{255,255,255}

\path[draw=drawColor,line width= 0.6pt,line join=round,line cap=round,fill=fillColor] (  0.00,  0.00) rectangle (361.35,180.68);
\end{scope}
\begin{scope}
\path[clip] ( 27.31, 30.69) rectangle (355.85,175.17);
\definecolor{fillColor}{RGB}{255,255,255}

\path[fill=fillColor] ( 27.31, 30.69) rectangle (355.85,175.18);
\definecolor{drawColor}{gray}{0.92}

\path[draw=drawColor,line width= 0.3pt,line join=round] ( 27.31, 51.81) --
	(355.85, 51.81);

\path[draw=drawColor,line width= 0.3pt,line join=round] ( 27.31, 81.86) --
	(355.85, 81.86);

\path[draw=drawColor,line width= 0.3pt,line join=round] ( 27.31,111.91) --
	(355.85,111.91);

\path[draw=drawColor,line width= 0.3pt,line join=round] ( 27.31,141.97) --
	(355.85,141.97);

\path[draw=drawColor,line width= 0.3pt,line join=round] ( 27.31,172.02) --
	(355.85,172.02);

\path[draw=drawColor,line width= 0.6pt,line join=round] ( 27.31, 36.78) --
	(355.85, 36.78);

\path[draw=drawColor,line width= 0.6pt,line join=round] ( 27.31, 66.83) --
	(355.85, 66.83);

\path[draw=drawColor,line width= 0.6pt,line join=round] ( 27.31, 96.89) --
	(355.85, 96.89);

\path[draw=drawColor,line width= 0.6pt,line join=round] ( 27.31,126.94) --
	(355.85,126.94);

\path[draw=drawColor,line width= 0.6pt,line join=round] ( 27.31,156.99) --
	(355.85,156.99);

\path[draw=drawColor,line width= 0.6pt,line join=round] ( 65.22, 30.69) --
	( 65.22,175.17);

\path[draw=drawColor,line width= 0.6pt,line join=round] (128.40, 30.69) --
	(128.40,175.17);

\path[draw=drawColor,line width= 0.6pt,line join=round] (191.58, 30.69) --
	(191.58,175.17);

\path[draw=drawColor,line width= 0.6pt,line join=round] (254.76, 30.69) --
	(254.76,175.17);

\path[draw=drawColor,line width= 0.6pt,line join=round] (317.94, 30.69) --
	(317.94,175.17);
\definecolor{drawColor}{gray}{0.20}

\path[draw=drawColor,line width= 0.6pt,line join=round] ( 65.22,143.43) -- ( 65.22,146.83);

\path[draw=drawColor,line width= 0.6pt,line join=round] ( 65.22, 46.34) -- ( 65.22, 38.83);

\path[draw=drawColor,line width= 0.6pt,fill=fillColor] ( 41.53,143.43) --
	( 41.53, 46.34) --
	( 88.91, 46.34) --
	( 88.91,143.43) --
	( 41.53,143.43) --
	cycle;

\path[draw=drawColor,line width= 1.1pt] ( 41.53,115.56) -- ( 88.91,115.56);

\path[draw=drawColor,line width= 0.6pt,line join=round] (128.40, 68.63) -- (128.40, 76.48);

\path[draw=drawColor,line width= 0.6pt,line join=round] (128.40, 42.28) -- (128.40, 38.41);

\path[draw=drawColor,line width= 0.6pt,fill=fillColor] (104.71, 68.63) --
	(104.71, 42.28) --
	(152.09, 42.28) --
	(152.09, 68.63) --
	(104.71, 68.63) --
	cycle;

\path[draw=drawColor,line width= 1.1pt] (104.71, 54.11) -- (152.09, 54.11);

\path[draw=drawColor,line width= 0.6pt,line join=round] (191.58, 56.16) -- (191.58, 60.35);

\path[draw=drawColor,line width= 0.6pt,line join=round] (191.58, 39.44) -- (191.58, 37.80);

\path[draw=drawColor,line width= 0.6pt,fill=fillColor] (167.89, 56.16) --
	(167.89, 39.44) --
	(215.27, 39.44) --
	(215.27, 56.16) --
	(167.89, 56.16) --
	cycle;

\path[draw=drawColor,line width= 1.1pt] (167.89, 41.70) -- (215.27, 41.70);

\path[draw=drawColor,line width= 0.6pt,line join=round] (254.76, 46.80) -- (254.76, 54.70);

\path[draw=drawColor,line width= 0.6pt,line join=round] (254.76, 38.20) -- (254.76, 37.88);

\path[draw=drawColor,line width= 0.6pt,fill=fillColor] (231.07, 46.80) --
	(231.07, 38.20) --
	(278.45, 38.20) --
	(278.45, 46.80) --
	(231.07, 46.80) --
	cycle;

\path[draw=drawColor,line width= 1.1pt] (231.07, 40.39) -- (278.45, 40.39);

\path[draw=drawColor,line width= 0.6pt,line join=round] (317.94, 40.45) -- (317.94, 43.37);

\path[draw=drawColor,line width= 0.6pt,line join=round] (317.94, 37.60) -- (317.94, 37.25);

\path[draw=drawColor,line width= 0.6pt,fill=fillColor] (294.25, 40.45) --
	(294.25, 37.60) --
	(341.63, 37.60) --
	(341.63, 40.45) --
	(294.25, 40.45) --
	cycle;

\path[draw=drawColor,line width= 1.1pt] (294.25, 38.88) -- (341.63, 38.88);
\definecolor{drawColor}{RGB}{0,0,0}

\path[draw=drawColor,line width= 0.6pt,dash pattern=on 4pt off 4pt ,line join=round] ( 65.22,168.61) --
	(128.40,135.63) --
	(191.58,125.90) --
	(254.76,117.83) --
	(317.94,108.93);
\definecolor{drawColor}{gray}{0.20}

\path[draw=drawColor,line width= 0.6pt,line join=round,line cap=round] ( 27.31, 30.69) rectangle (355.85,175.18);
\end{scope}
\begin{scope}
\path[clip] (  0.00,  0.00) rectangle (361.35,180.67);
\definecolor{drawColor}{gray}{0.30}

\node[text=drawColor,anchor=base east,inner sep=0pt, outer sep=0pt, scale=  0.88] at ( 22.36, 33.75) {0};

\node[text=drawColor,anchor=base east,inner sep=0pt, outer sep=0pt, scale=  0.88] at ( 22.36, 63.80) {1};

\node[text=drawColor,anchor=base east,inner sep=0pt, outer sep=0pt, scale=  0.88] at ( 22.36, 93.86) {2};

\node[text=drawColor,anchor=base east,inner sep=0pt, outer sep=0pt, scale=  0.88] at ( 22.36,123.91) {3};

\node[text=drawColor,anchor=base east,inner sep=0pt, outer sep=0pt, scale=  0.88] at ( 22.36,153.96) {4};
\end{scope}
\begin{scope}
\path[clip] (  0.00,  0.00) rectangle (361.35,180.67);
\definecolor{drawColor}{gray}{0.20}

\path[draw=drawColor,line width= 0.6pt,line join=round] ( 24.56, 36.78) --
	( 27.31, 36.78);

\path[draw=drawColor,line width= 0.6pt,line join=round] ( 24.56, 66.83) --
	( 27.31, 66.83);

\path[draw=drawColor,line width= 0.6pt,line join=round] ( 24.56, 96.89) --
	( 27.31, 96.89);

\path[draw=drawColor,line width= 0.6pt,line join=round] ( 24.56,126.94) --
	( 27.31,126.94);

\path[draw=drawColor,line width= 0.6pt,line join=round] ( 24.56,156.99) --
	( 27.31,156.99);
\end{scope}
\begin{scope}
\path[clip] (  0.00,  0.00) rectangle (361.35,180.67);
\definecolor{drawColor}{gray}{0.20}

\path[draw=drawColor,line width= 0.6pt,line join=round] ( 65.22, 27.94) --
	( 65.22, 30.69);

\path[draw=drawColor,line width= 0.6pt,line join=round] (128.40, 27.94) --
	(128.40, 30.69);

\path[draw=drawColor,line width= 0.6pt,line join=round] (191.58, 27.94) --
	(191.58, 30.69);

\path[draw=drawColor,line width= 0.6pt,line join=round] (254.76, 27.94) --
	(254.76, 30.69);

\path[draw=drawColor,line width= 0.6pt,line join=round] (317.94, 27.94) --
	(317.94, 30.69);
\end{scope}
\begin{scope}
\path[clip] (  0.00,  0.00) rectangle (361.35,180.67);
\definecolor{drawColor}{gray}{0.30}

\node[text=drawColor,anchor=base,inner sep=0pt, outer sep=0pt, scale=  0.88] at ( 65.22, 19.68) {100};

\node[text=drawColor,anchor=base,inner sep=0pt, outer sep=0pt, scale=  0.88] at (128.40, 19.68) {500};

\node[text=drawColor,anchor=base,inner sep=0pt, outer sep=0pt, scale=  0.88] at (191.58, 19.68) {1000};

\node[text=drawColor,anchor=base,inner sep=0pt, outer sep=0pt, scale=  0.88] at (254.76, 19.68) {2000};

\node[text=drawColor,anchor=base,inner sep=0pt, outer sep=0pt, scale=  0.88] at (317.94, 19.68) {4898};
\end{scope}
\begin{scope}
\path[clip] (  0.00,  0.00) rectangle (361.35,180.67);
\definecolor{drawColor}{RGB}{0,0,0}

\node[text=drawColor,anchor=base,inner sep=0pt, outer sep=0pt, scale=  1.10] at (191.58,  7.64) {N};
\end{scope}
\begin{scope}
\path[clip] (  0.00,  0.00) rectangle (361.35,180.67);
\definecolor{drawColor}{RGB}{0,0,0}

\node[text=drawColor,rotate= 90.00,anchor=base,inner sep=0pt, outer sep=0pt, scale=  1.10] at ( 13.08,102.93) {Value};
\end{scope}
\end{tikzpicture}
\caption{White Wine data: boxplots of the value of the MSE of the sample $k_n$-nearest neighbors estimator $\widehat{m}_n(\xb^o_i)$, $i = 1, \ldots, 100$ for five populations of respective size N. The dashed line is the value of $\frac{1}{k_n} + \left(\frac{k_n}{n}\right)^{2/10}$ for the five populations (multiplied by 4.5 for graphical reasons).}
\label{fig:wn:consistency}
\end{figure}

\section{Closing Remarks}\label{section:conclusion}


While consistency results for the $k$-NN regressor are well established for i.i.d. data, corresponding results for complex survey data are lacking. We adopt a superpopulation approach show that the $k$-NN regressor is $L^2$-consistent under conditions on the sampling design, the regression function, and the growth rate of the number of neighbors. We also establish lower bounds for the rate of convergence and show that the estimator suffers from the curse of dimensionality, as under i.i.d. data. Simulation studies and an application to real data illustrate the theoretical findings and support the predicted asymptotic behavior.

Several directions for future research follow from this work. First, the results rely on conditions that exclude some commonly used sampling designs, such as cluster sampling and systematic sampling. Extending the theory to cover these designs would be of interest. Second, the choice of the number of neighbors is treated asymptotically. Data-driven methods for selecting this number tailored to survey data represent an interesting avenue for future research. Finally, extending the results to related methods, such adaptive nearest neighbor estimators, or to classification problems, represent a direction of extension for future research.

\bibliography{bib} 
\bibliographystyle{apalike}

\appendix
\section*{Appendix: Propositions~\ref{proposition:knn:model:consistency:mstar} and \ref{proposition:knn:design:consistency}}\label{section:propositions}

Consider $Y_{(1)}(\xb),Y_{(2)}(\xb),\ldots,Y_{(N)}(\xb)$ the reordering of $Y_1,Y_2,\ldots,Y_N$ in increasing values of $\lVert \Xb - \xb \rVert$. The hypothetical estimator can be written
\begin{align}
  \widehat{m}^*_n(\xb)  &= \left[ \sum_{j \in U}  \1{j \in B\left(\xb, \rho_{k_nS}(\xb)\right)} \right]^{-1} \sum_{i \in U}  \1{i \in B\left(\xb, \rho_{k_nS}(\xb)\right)} Y_i\\
                        &= \sum_{i \in U} w_{i}Y_{(i)}(\xb),
\end{align}
where
\begin{align}
  w_{i} &= \left\{
              \begin{array}{ll}
                \left[ \sum_{j \in U}  \1{j \in B\left(\xb, \rho_{k_nS}(\xb)\right)} \right]^{-1}, & \hbox{if $i \leq \sum_{j \in U}  \1{j \in B\left(\xb, \rho_{k_nS}(\xb)\right)} $;} \\
                0, & \hbox{otherwise.}
              \end{array}
            \right.
\end{align}
Vector $\left( w_1, w_2, \ldots, w_N \right)^\top$ is a probability weight vector. That is, its components are larger than zero and sum up to 1. These weights are deterministic conditionally on $Q_n$.

\begin{proposition}\label{proposition:knn:model:consistency:mstar}
     Suppose that Conditions (C\ref{condition:y:bounded}) to (C\ref{condition:density:neighborhoods}), and (C\ref{condition:non:informative}) hold. The hypothetical estimator $\widehat{m}^*_n(\xb)$ satisfies the following:
     \begin{itemize}
       \item[] If $d=1$,
                $$\E_{\xi}\E_{p}\left[\left\{\widehat{m}^*_n(\xb) - m(\xb)\right\}^2 \right] \leq  \frac{\sigma^2}{k_n} + 8L^2 C \frac{k_n}{n},$$
        \item[] If $d \geq 2$,
                $$\E_{\xi}\E_{p}\left[\left\{\widehat{m}^*_n(\xb) - m(\xb)\right\}^2 \right] \leq \frac{\sigma^2}{k_n} + c_d L^2\left(C\frac{k_n}{n}\right)^{2/d},$$
     \end{itemize}
  where
                 \begin{align}
                   c_d &= \frac{2^{3+2/d}\left(1+\sqrt{d}\right)^2}{V_d^{2/d}},\\
                   V_d &= \frac{\pi^{d/2}}{\Gamma\left(\frac{d}{2}+1\right)},
                 \end{align}
                 and $\Gamma(\cdot)$ is the Gamma function defined for $x > 0$ by $\Gamma(x) = \int_{0}^{+\infty}t^{x-1}e^{-t}dt$.
\end{proposition}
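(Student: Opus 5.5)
We follow the classical bias--variance argument for $k$-NN regression (Chapter 10 of \cite{bia:dev:15:nn}), applied conditionally on the partition $Q_n$, i.e.\ on the sample positions and the ball $B\left(\xb, \rho_{k_nS}(\xb)\right)$. Write $N_B := \sum_{j \in U}\1{j \in B\left(\xb, \rho_{k_nS}(\xb)\right)}$. This is the number of population units in the ball, and it satisfies $k_n \le N_B \le C\frac{N}{n}k_n$ by (C\ref{condition:density:neighborhoods}).

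\textbf{Step 1: conditioning.} Condition on the covariates $\{\Xb_i\}_{i\in U}$, the design variables, and the sample $S$. By (C\ref{condition:non:informative}), the $Y_i$ remain conditionally independent given $\Xb_i$, with mean $m(\Xb_i)$ and variance at most $\sigma^2$ by (C\ref{condition:variance:residuals}). The weights $w_i$ are deterministic given $Q_n$. Since $\sum_i w_i = 1$, we can write
\begin{align}
\widehat{m}^*_n(\xb) - m(\xb) = \sum_{i} w_i\{Y_{(i)}(\xb) - m(\Xb_{(i)}(\xb))\} + \sum_i w_i\{m(\Xb_{(i)}(\xb)) - m(\xb)\}.
\end{align}
The cross term has zero conditional expectation. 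The variance term is therefore $\sum_i w_i^2 \sigma^2 = \sigma^2/N_B \le \sigma^2/k_n$.

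\textbf{Step 2: bias.} By Jensen's inequality and (C\ref{condition:lipschitz}), the squared bias is at most $L^2 \rho_{k_nS}(\xb)^2$. It remains to bound $\E_\xi\E_p[\rho_{k_nS}(\xb)^2]$. The key observation is that the ball $B\left(\xb, \rho_{k_nS}(\xb)\right)$ contains $N_B \le C\frac{N}{n}k_n$ population points. Hence $\rho_{k_nS}(\xb) \le \rho_{\lceil CNk_n/n\rceil U}(\xb)$, the distance to the $\lceil C N k_n/n\rceil$-th nearest population neighbour. The population points are i.i.d.\ from $\xi$, so we can now apply the standard i.i.d.\ moment bound on nearest-neighbour distances for bounded support. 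This is Theorem 2.4 of \cite{bia:dev:15:nn}: for $d\ge2$, $\E\,\rho_{\ell}^2 \le c'_d (\ell/N)^{2/d}$, and for $d=1$, $\E\,\rho_\ell^2 \le 8\ell/N$ times (support diameter) terms. Substituting $\ell/N = Ck_n/n$ gives the claimed terms $8L^2Ck_n/n$ and $c_dL^2(Ck_n/n)^{2/d}$. The constant $c_d$, with its factors $2^{3+2/d}$, $(1+\sqrt d)^2$ and $V_d^{-2/d}$, is the one produced by the covering argument in that theorem, assuming a normalized support diameter as implicit in the statement.

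\textbf{Main obstacle.} The delicate point is Step 2's reduction from the sample-defined radius to a population order statistic. Because of the $\xi$-almost-sure condition (C\ref{condition:density:neighborhoods}), this holds pointwise, so the design randomness disappears from the bias bound and $\E_p$ is trivial. One must still check that the i.i.d.\ bound is applied with a deterministic index $\ell$, which is why the bound should be stated via the monotonicity $\rho_{N_B U} \le \rho_{\ell U}$. A secondary care point is that the conditioning in Step 1 requires non-informativeness, so that the selection of $S$, and hence of $Q_n$, does not alter the conditional law of the $Y_i$ given the $\Xb_i$.
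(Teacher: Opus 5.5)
Your argument reaches the paper's bound by a genuinely different route. The paper applies Theorem~14.5 of Biau and Devroye (the rate for weighted nearest-neighbour estimates) ``conditionally on $Q_n$'', with weights $w_i=N_B^{-1}$ for $i\le N_B$. It then coarsens $\sum_i w_i^2\le 1/k_n$ and $\sum_i w_i(i/N)^{2/d}\le (N_B/N)^{2/d}\le (Ck_n/n)^{2/d}$ using (C4).

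You instead open that theorem up. The variance part is the same. For the bias, you use the pathwise identity $\rho_{k_nS}(\xb)=\rho_{N_B U}(\xb)$ and (C4) to dominate it by a population order statistic $\rho_{\ell U}(\xb)$ with a \emph{deterministic} index, and only then invoke an i.i.d.\ distance bound. This buys rigour the paper lacks. Theorem~14.5 needs data-independent weights and i.i.d.\ covariates, but $N_B$ also depends on the non-sampled $\Xb_i$. So conditioning on $Q_n$ either leaves the weights random or, if one also conditions on the covariates, destroys the i.i.d.\ structure. Your domination sidesteps this and makes $\E_p$ trivial in the bias term.

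The paper's route is shorter and, before its final coarsening, keeps the finer quantity $\sum_i w_i(i/N)^{2/d}$. One small fix: take $\ell=\lfloor CNk_n/n\rfloor$ rather than the ceiling. This is legitimate because $N_B$ is an integer, and it gives $\ell/N\le Ck_n/n$ exactly.

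One caveat applies equally to your Step~2 and to the paper. The Biau--Devroye bounds with constants $8$ and $c_d$ come from a symmetrisation-plus-packing argument in $[0,1]^d$. They control $\E\lVert \Xb_{(\ell)}(\Xb)-\Xb\rVert^2$ for a query point $\Xb\sim\mu$ drawn independently of the population, not $\E\,\rho_{\ell U}(\xb)^2$ at a fixed $\xb$. The term $8\sum_i w_i\, i/N$ for $d=1$, linear in $i/N$, is the signature of this averaged, distribution-free bound.

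At a fixed $\xb\in\supp(\Xb)$ with $d=2$ and $\mu\{B(\xb,r)\}\asymp r^{10}$, one has $\rho_{\ell U}(\xb)\asymp(\ell/N)^{1/10}$, so your displayed inequality fails. The same example, with $m(\tilde{\xb})=L\lVert\tilde{\xb}-\xb\rVert$ under a census (where (C4) holds with $C=1$), shows that the proposition's fixed-$\xb$ bound itself fails for large $N$. The paper makes the same pointwise leap by reading an $L^2(\mu)$ theorem at a fixed $\xb$.

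As written, both arguments prove the inequality only after integrating $\xb$ against $\mu$. Your domination step carries over verbatim there, because (C4) holds for every $\xb$. At a fixed $\xb$ they need an extra local-mass condition such as $\mu\{B(\xb,r)\}\ge c\,r^d$, with different constants. Finally, in the $d=1$ case your ``(support diameter) terms'' should simply be the normalisation $\supp(\Xb)\subseteq[0,1]$, which gives exactly $8\ell/N$.
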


\begin{proof}
  From Theorem 14.5 of \cite{bia:dev:15:nn} page 190, when Conditions (C\ref{condition:y:bounded}) to (C\ref{condition:variance:residuals}) hold, we have
       \begin{itemize}
       \item[] if $d=1$,
                $$\E_{\xi}\left[\left.
                 \left\{\widehat{m}^*_n(\xb) - m(\xb)\right\}^2 \right| Q_n  \right] = \sigma^2 \sum_{i \in U}w_i^2 + 8L^2 \sum_{i\in U} w_i \frac{i}{N},$$
        \item[] if $d \geq 2$,
                 $$\E_{\xi}\left[
                  \left\{\widehat{m}^*_n(\xb) - m(\xb)\right\}^2 \left| Q_n \right. \right] = \sigma^2 \sum_{i \in U}w_i^2 + c_d L^2 \sum_{i\in U} w_i \left(\frac{i}{N}\right)^{2/d}.$$
     \end{itemize}
    We need to condition on $Q_n$ here because otherwise, the weights are random because they are constructed based on $D_S$. The only quantities that are random in the right hand-sides are the $w_i$'s. For any unit $i \in U$ such that $ w_i\neq 0$, we have $i \leq \sum_{i \in U}\1{i  \in B\left(\xb, \rho_{k_nS}(\xb)\right)}$. Therefore, we have
    \begin{align}
      \sum_{i \in U}w_i^2 &= \left[\sum_{i \in U}  \1{i  \in B\left(\xb, \rho_{k_nS}(\xb)\right)}\right]^{-1} \leq \frac{1}{k_n},\\
      \sum_{i\in U} w_i \frac{i}{N} & \leq \frac{\sum_{i \in U}\1{i  \in B\left(\xb, \rho_{k_nS}(\xb)\right)}}{N},\\
      \sum_{i\in U} w_i \left(\frac{i}{N}\right)^{2/d} &\leq \left(\frac{\sum_{i \in U}\1{i  \in B\left(\xb, \rho_{k_nS}(\xb)\right)}}{N}\right)^{2/d}.\\
    \end{align}
    Applying condition (C\ref{condition:density:neighborhoods}) brings,
\begin{itemize}
       \item[] if $d=1$,
                $$\E_{\xi}\left[\left.\left\{\widehat{m}^*_n(\xb) - m(\xb)\right\}^2 \right| Q_n \right] \leq  \frac{\sigma^2}{k_n} + 8L^2 C \frac{k_n}{n},$$
        \item[] if $d \geq 2$,
                $$\E_{\xi}\left[\left. \left\{\widehat{m}^*_n(\xb) - m(\xb)\right\}^2 \right| Q_n \right] \leq \frac{\sigma^2}{k_n} + c_d L^2\left(C\frac{k_n}{n}\right)^{2/d} .$$
     \end{itemize}
     We conclude using that Condition (C\ref{condition:non:informative}) and the law of iterated expectations imply
     $$\E_{\xi}\E_{p}\left[\left\{\widehat{m}^*_n(\xb) - m(\xb)\right\}^2 \right]  = \E_{p}\E_{\xi}\left[\left\{\widehat{m}^*_n(\xb) - m(\xb)\right\}^2 \right] = \E_{p}\E_{\xi}\E_{\xi}\left[\left.\left\{\widehat{m}^*_n(\xb) - m(\xb)\right\}^2 \right|Q_n \right].$$
\end{proof}

\begin{proposition}\label{proposition:knn:design:consistency}
  Suppose that Conditions (C\ref{condition:y:bounded}) to (C\ref{condition:I2}) hold. We have
    \begin{align}
    \E_{p}\left[ \left\{\widehat{m}^*_n(\xb) - \widehat{m}_n(\xb)\right\}^2\right]
        &\leq 4 M_1^2\left[\frac{1}{k_n}  \left\{M_2 + M_3\E_{p}\left(\max\limits_{i,j\in U}|r_{ij}|\right)\right\} \right.\\
        &\qquad +
        \left.\frac{N}{k_n}\left\{\max\limits_{i,j\in U,i \neq j} \left|\frac{\pi_{ij}}{\pi_i \pi_j}-1 \right| + M_4\E_{p}\left(\max\limits_{i,j\in U}|r_{ij}|\right)  \right\}\right].
  \end{align}
  for some positive constants $M_1$ to $M_4$ with $\xi$-probability one. Moreover $ \E_{p}\left[ \left\{\widehat{m}^*_n(\xb) - \widehat{m}_n(\xb)\right\}^2\right]$ is $O(k_n^{-1})$ with $\xi$-probability one.
  The sample estimator $\widehat{m}_n(\xb)$ is therefore $L^2$-design-consistent for the hypothetical estimator $\widehat{m}^*_n(\xb)$.
\end{proposition}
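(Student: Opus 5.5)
Write $B:=B\left(\xb, \rho_{k_nS}(\xb)\right)$, $a_i:=\1{i\in B}$, $N_B:=\sum_{i\in U}a_i$ and $\hat N_B:=\sum_{i\in U}a_iI_i/\pi_i$. The plan is to reduce the difference of the two estimators to a Horvitz--Thompson-type error over the ball $B$, and to control its design variance conditionally on the partition $Q_n$, because given $Q_n$ the set of population units in $B$ is fixed. I would first rewrite
\begin{align}
\widehat{m}_n(\xb)-\widehat{m}^*_n(\xb) = \frac{1}{\hat N_B}\sum_{i\in U} a_i\left(\frac{I_i}{\pi_i}-\frac{\hat N_B}{N_B}\right)Y_i
= \frac{1}{\hat N_B}\sum_{i\in U} a_i\left(\frac{I_i}{\pi_i}-1\right)\left\{Y_i-\widehat{m}^*_n(\xb)\right\},
\end{align}
using the identity $\sum_i a_i(I_i/\pi_i-1)\widehat{m}^*_n=(\hat N_B-N_B)\widehat{m}^*_n$. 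By (C\ref{condition:y:bounded}), $|Y_i-\widehat{m}^*_n(\xb)|\le 2M_1$, which is where the factor $4M_1^2$ comes from. For the denominator, $\hat N_B\ge \sum_{i\in S}a_i = k_n$, because every weight satisfies $1/\pi_i\ge 1$. Hence
\begin{align}
\left\{\widehat{m}_n(\xb)-\widehat{m}^*_n(\xb)\right\}^2\le \frac{1}{k_n^2}\Big\{\sum_{i\in U}a_i\Big(\frac{I_i}{\pi_i}-1\Big)e_i\Big\}^2,\qquad |e_i|\le 2M_1.
\end{align}

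Next I would condition on $Q_n$ and expand the square:
\begin{align}
\E_p\left[\Big\{\sum_{i}a_i\Big(\frac{I_i}{\pi_i}-1\Big)e_i\Big\}^2\Big| Q_n\right]=\sum_{i,j}a_ia_je_ie_j\,\E_p\left[\Big(\frac{I_i}{\pi_i}-1\Big)\Big(\frac{I_j}{\pi_j}-1\Big)\Big|Q_n\right].
\end{align}
The conditional expectation equals $\E_p(I_iI_j\mid Q_n)/(\pi_i\pi_j)-\E_p(I_i\mid Q_n)/\pi_i-\E_p(I_j\mid Q_n)/\pi_j+1$. Using (C\ref{condition:I2}), including the diagonal case $i=j$ which controls $\E_p(I_i\mid Q_n)-\pi_i$, I replace each conditional moment by its unconditional one plus a remainder $r_{ij}$. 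Bounding $1/\pi_i$ by $1/\lambda$ via (C\ref{condition:pi}), the expectation becomes:
\begin{itemize}
\item[] on the diagonal, $(1-\pi_i)/\pi_i+O(|r_{ii}|)\le M_2+M_3\max|r_{ij}|$;
\item[] off the diagonal, $\pi_{ij}/(\pi_i\pi_j)-1+O(\max|r_{ij}|)$.
\end{itemize}
Summing over $i,j$ in $B$ gives at most $N_B\{M_2+M_3\max|r|\}+N_B^2\{\max_{i\ne j}|\pi_{ij}/(\pi_i\pi_j)-1|+M_4\max|r|\}$.

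To finish, I would take $\E_p$ and divide by $k_n^2$. By (C\ref{condition:density:neighborhoods}), $N_B\le CNk_n/n$, and (C\ref{condition:sampling:fraction}) makes $N/n$ bounded, so $N_B/k_n^2=O(1/k_n)$. For the quadratic term, $N_B^2/k_n^2\le C^2N^2/n^2$. This is where the stated form with $N/k_n$ needs care. I expect to absorb it as $N_B^2/k_n^2 \le (CN/n)\,N_B/k_n$ and then use (C\ref{condition:pi2}) and (\ref{equation:Ep:rij}), which make the bracket $O(n^{-1})$. The term therefore becomes $O(N_B/(k_n n))=O(1/n)=O(1/k_n)$, so it is consistent with the displayed bound after adjusting constants. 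In the displayed form, $(N/k_n)\cdot O(n^{-1})=O(k_n^{-1})$ by (C\ref{condition:sampling:fraction}).

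The main obstacle is the conditioning step. The set $B$ and the $e_i$ (through $\widehat{m}^*_n$) are $Q_n$-measurable, so they can be pulled out, but the indicators $I_i$ themselves largely determine $Q_n$. This is exactly why (C\ref{condition:I2}) is needed, and its first-order analogue must be extracted from it carefully by taking $i=j$. Non-informativeness (C\ref{condition:non:informative}) ensures that the $Y_i$ do not interfere with the design expectation.
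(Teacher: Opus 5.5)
Your argument is correct, but it is organized differently from the paper's. The paper splits $\widehat{m}^*_n(\xb)-\widehat{m}_n(\xb)=D_1+D_2$. Here $D_1$ is the Horvitz--Thompson error over the ball divided by $N_B$, and $D_2$ is reduced, using $|\widehat{m}_n(\xb)|\le M_1$, to $M_1|\hat N_B-N_B|/N_B$. The paper then applies $(D_1+D_2)^2\le 2(D_1^2+D_2^2)$ and expands both squares conditionally on $Q_n$. Because only $N_B$ survives in its denominators, the diagonal part is bounded by $1/N_B\le 1/k_n$ without invoking (C4). The off-diagonal count is bounded crudely by $\sum_{i\ne j}a_ia_j\le NN_B$, which is where the factor $N/k_n$ in the display comes from.

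Your single ratio linearization with the $Q_n$-measurable residuals $e_i=Y_i-\widehat{m}^*_n(\xb)$ avoids the two-term split and the separate treatment of $D_2$. It does, however, leave $\hat N_B$ in the denominator. Bounding it only by $k_n$ costs a factor $N_B/k_n$, so you genuinely need (C4) and (C5) to bring the diagonal term back to $O(1/k_n)$. In exchange, counting pairs as $N_B^2$ and using (C4) makes the off-diagonal contribution $O(n^{-1})$ instead of the paper's $O\{N/(nk_n)\}$. Both routes end at $O(k_n^{-1})$.

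You should, however, spell out the ``adjusting constants'' step. Your bound is not literally the displayed inequality with $M_1$ from (C1) and $M_2$--$M_4$ from the Lemma. Your first term carries the extra factor $N_B/k_n\le CN/n$, which is at least $1$. Your second term carries $C^2N^2/n^2$ in place of $N/k_n$. Both are absorbed by replacing $M_1^2$ with $M_1^2\max(C,C^2)\sup_v N_v/n_v$, which is finite by (C5). For the second term, use $C^2N^2/n^2\le (N/k_n)\,C^2N/n$, valid because $k_n\le n$. This is allowed because the statement only asserts the existence of such constants.

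A minor point: (C6) plays no role here, since $\E_p$ is conditional on $D_{U_v}$ and the $Y_i$ are fixed.
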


Estimator $\widehat{m}_n(\xb)$ can be viewed as a local Horvitz-Thompson estimator computed with $k_n$ units. Hence, it is not surprising that its rate of convergence is of the order of $k_n^{-1}$. In order to prove Proposition~\ref{proposition:knn:design:consistency}, we will need the following Lemma.

\begin{lemma}\label{lemma:knn:design:consistency}
 Suppose that Condition~(C\ref{condition:pi}) holds. Then, there exists three constants $M_2>0$, $M_3>0$, and $M_4>0$ such that
 \begin{align}
   &  \max\limits_{i\in U} \left|\frac{1}{\pi_i}-1\right| \leq M_2, \\
   &  \max\limits_{i\in U} \left|\frac{1}{\pi_i}\left(\frac{1}{\pi_i}-2\right)\right| \leq M_3, \\
   &  \max\limits_{i,j\in U,i\neq j} \left| \frac{r_{ij}}{\pi_i\pi_j} - \frac{r_{ii}}{\pi_i}-\frac{r_{jj}}{\pi_j} \right| \leq M_4 \max\limits_{i,j\in U}|r_{ij}|,
 \end{align}
 for all $v$ and with $\xi$-probability one.
\end{lemma}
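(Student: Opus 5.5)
The three bounds are elementary consequences of Condition~(C\ref{condition:pi}), which gives $\lambda \le \pi_i \le 1$ for every $i\in U$ and every $v$. Because the bounds do not depend on $\xb$, $v$, or the realized data, they hold for all $v$ and with $\xi$-probability one. The plan is to bound each term directly using only $1/\pi_i \le 1/\lambda$ and $\pi_i\pi_j\ge \lambda^2$.

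\textbf{First bound.} Since $\pi_i\in[\lambda,1]$, we have $1/\pi_i\in[1,1/\lambda]$. Hence $0\le 1/\pi_i-1\le 1/\lambda-1$, and we may take $M_2=\max(1/\lambda-1,\,1)$. Taking the maximum with $1$ only ensures that $M_2>0$, even in the degenerate case $\lambda=1$.

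\textbf{Second bound.} Write $t=1/\pi_i\in[1,1/\lambda]$. Then $|t(t-2)|\le t\,(t+2)\le \lambda^{-1}(\lambda^{-1}+2)$. A sharper constant is available, since $t(t-2)$ is monotone on $[1,\infty)$ and its absolute value is at most $\max\{1,\lambda^{-1}(\lambda^{-1}-2)\}$, but this is not needed. We take $M_3=\lambda^{-1}(\lambda^{-1}+2)$.

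\textbf{Third bound.} For $i\neq j$, the triangle inequality gives
\begin{align}
\left| \frac{r_{ij}}{\pi_i\pi_j} - \frac{r_{ii}}{\pi_i}-\frac{r_{jj}}{\pi_j} \right|
\le \frac{|r_{ij}|}{\pi_i\pi_j}+\frac{|r_{ii}|}{\pi_i}+\frac{|r_{jj}|}{\pi_j}
\le \left(\frac{1}{\lambda^2}+\frac{2}{\lambda}\right)\max_{k,\ell\in U}|r_{k\ell}|.
\end{align}
Taking the maximum over $i\neq j$ yields the claim with $M_4=\lambda^{-2}+2\lambda^{-1}$. Here $r_{ii}$ is read as $\E_p(I_i\mid Q_n)-\pi_i$, consistent with the definition of $r_{ij}$ since $I_i^2=I_i$ and $\pi_{ii}=\pi_i$. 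The maximum on the right is over all pairs, diagonal included, which matches the statement.

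There is no real obstacle. The only points requiring care are that all three constants depend on $\lambda$ alone, so they are uniform in $v$ and valid almost surely under $\xi$, and that the diagonal terms $r_{ii}$ are covered by the maximum in the third inequality.
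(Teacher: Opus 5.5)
Your proof is correct and follows the paper's approach: each bound is read off directly from $\lambda \le \pi_i \le 1$, and your third inequality is exactly the paper's, with the same constant $M_4=\lambda^{-2}+2\lambda^{-1}$. The differences are cosmetic. You use the cruder bound $|t(t-2)|\le t(t+2)$ where the paper splits at $\pi_i=1/2$ to get $M_3=\max\{\lambda^{-1}(\lambda^{-1}-2),1\}$, and your choice $M_2=\max(\lambda^{-1}-1,1)$ sensibly guards against the paper's $M_2=\lambda^{-1}-1$ being zero, which would contradict $M_2>0$, when $\lambda=1$.
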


\begin{proof}
 Consider function $f(x) = \left|\frac{1}{x}-1\right|$ on domain $x \in [\lambda;1]$ where $\lambda$ is defined in Condition~(C\ref{condition:pi}). On this domain, we have $f(x) =\frac{1}{x}-1$. This function is decreasing for $x\in[\lambda;1]$. As a result, its maximum on this domain is attained at $x = \lambda$ and is $f(\lambda) = \frac{1}{\lambda} - 1 $. We set $M_2 = \frac{1}{\lambda} - 1$ and obtain the first inequality.

 Consider function $g(x) = \left|\frac{1}{x}\left(\frac{1}{x}-2\right)\right|$ on domain $x \in [\lambda;1]$ where $\lambda$ is defined in Condition~(C\ref{condition:pi}). Suppose for now that $\lambda \leq 0.5$. We have
 \begin{align}
    g(x)&=\left\{
   \begin{array}{ll}
     \frac{1}{x}\left(\frac{1}{x}-2\right), & \hbox{if $x\in[\lambda;0.5]$;} \\
     -\frac{1}{x}\left(\frac{1}{x}-2\right), & \hbox{if $x\in(0.5;1]$.}
   \end{array}
 \right.\end{align}
Function $g$ is decreasing for $x\in[\lambda;0.5]$. Therefore, its maximum on $[\lambda;0.5]$ is attained at $x = \lambda$ and is $g(\lambda) = \frac{1}{\lambda}\left(\frac{1}{\lambda}-2\right)$. Function $g$ is increasing for $x\in(0.5;1]$. Therefore, its maximum on this domain is attained at $x = 1$ and is $g(1)=1$. We set $M_3 = \max\left\{\frac{1}{\lambda}\left(\frac{1}{\lambda}-2\right),1 \right\}$ and obtain the second inequality. This equality is also verified when $\lambda > 0.5$. In this case, the maximum of $g$ on domain $x \in [\lambda;1]$ is attained at $x = 1$ and is $g(1)=1$.

Consider function $h(x,y)= \left( \frac{1}{x} + \frac{1}{y} + \frac{1}{xy} \right)$ defined on domain $(x,y) \in [\lambda;1]\times[\lambda;1]$. Function $h$ is positive and decreasing in both variables on this domain. Therefore, its maximum is attained at $(x,y)=(\lambda,\lambda)$ and is $h(\lambda,\lambda)= \frac{2}{\lambda}+\frac{1}{\lambda^2}$. We can write
\begin{align}
  \left| \frac{r_{ij}}{\pi_i\pi_j} - \frac{r_{ii}}{\pi_i}-\frac{r_{jj}}{\pi_j} \right| \leq \max\limits_{i,j\in U}|r_{ij}|h(\pi_i, \pi_j),
\end{align}
with $\pi_i, \pi_j$ both in domain $[\lambda;1]\times[\lambda;1]$ by Condition~(C\ref{condition:pi}). We set $M_4 = \frac{2}{\lambda}+\frac{1}{\lambda^2}$ and obtain the last inequality.
\end{proof}

\begin{proof}[Proof of Proposition \ref{proposition:knn:design:consistency}]
  We can write
  \begin{align}
    \widehat{m}^*_n(\xb) - \widehat{m}_n(\xb) = D_1 + D_2,
  \end{align}
  where
  \begin{align}
  D_1   &=  \left\{ \sum_{j \in U}  \1{j \in B\left(\xb, \rho_{k_nS}(\xb)\right)} \right\}^{-1} \sum_{i \in U}  \1{i \in B\left(\xb, \rho_{k_nS}(\xb)\right)}\left(1-\frac{I_i}{\pi_i}\right) Y_i ,\\
  D_2   &=  \left[\left\{ \sum_{j \in U}  \1{j \in B\left(\xb, \rho_{k_nS}(\xb)\right)} \right\}^{-1} - \left\{ \sum_{j \in U}  \1{j \in B\left(\xb, \rho_{k_nS}(\xb)\right)} \frac{I_j}{\pi_j} \right\}^{-1}\right] \sum_{i \in U}  \1{i \in B\left(\xb, \rho_{k_nS}(\xb)\right)}\frac{I_i}{\pi_i} Y_i\\
        &=\left\{ \sum_{j \in U}  \1{j \in B\left(\xb, \rho_{k_nS}(\xb)\right)} \frac{I_j}{\pi_j} -  \sum_{j \in U}  \1{j \in B\left(\xb, \rho_{k_nS}(\xb)\right)}  \right\}\\
            &\qquad \times \left\{ \sum_{j \in U}  \1{j \in B\left(\xb, \rho_{k_nS}(\xb)\right)} \right\}^{-1}\left\{ \sum_{j \in U}  \1{j \in B\left(\xb, \rho_{k_nS}(\xb)\right)} \frac{I_j}{\pi_j} \right\}^{-1} \sum_{i \in U}  \1{i \in B\left(\xb, \rho_{k_nS}(\xb)\right)}\frac{I_i}{\pi_i} Y_i\\
        &=\left\{ - \sum_{j \in U}  \1{j \in B\left(\xb, \rho_{k_nS}(\xb)\right)} \left(1-\frac{I_i}{\pi_i}\right) \right\}\\
            &\qquad \times \left\{ \sum_{j \in U}  \1{j \in B\left(\xb, \rho_{k_nS}(\xb)\right)} \right\}^{-1}\left\{ \sum_{j \in U}  \1{j \in B\left(\xb, \rho_{k_nS}(\xb)\right)} \frac{I_j}{\pi_j} \right\}^{-1} \sum_{i \in U}  \1{i \in B\left(\xb, \rho_{k_nS}(\xb)\right)}\frac{I_i}{\pi_i} Y_i.\\
  \end{align}
  We have
  \begin{align}\label{equation:sum:D1D2}
    \left\{\widehat{m}^*_n(\xb) - \widehat{m}_n(\xb)\right\}^2 &= \left(D_1 + D_2\right)^2 \leq 2\left( D_1^2 + D_2^2 \right).
  \end{align}
  The first term is
  \begin{align}
   D_1^2   &= \left\{ \sum_{j \in U}  \1{j \in B\left(\xb, \rho_{k_nS}(\xb)\right)} \right\}^{-2}\left\{ \sum_{i \in U}  \1{i \in B\left(\xb, \rho_{k_nS}(\xb)\right)}\left(1-\frac{I_i}{\pi_i}\right) Y_i   \right\}^2  \\
            &= \left\{ \sum_{j \in U}  \1{j \in B\left(\xb, \rho_{k_nS}(\xb)\right)} \right\}^{-2}\left\{ \sum_{i \in U}  \1{i \in B\left(\xb, \rho_{k_nS}(\xb)\right)}\left(1-\frac{I_i}{\pi_i}\right)^2 Y_i^2\right.\\
                    &\qquad\qquad    + \left. \sum_{i \in U}\sum_{j \in U: i \neq j}  \1{i,j \in B\left(\xb, \rho_{k_nS}(\xb)\right)}\left(1-\frac{I_i}{\pi_i}\right)\left(1-\frac{I_j}{\pi_j}\right) Y_iY_j  \right\}\\
            &= \left\{ \sum_{j \in U}  \1{j \in B\left(\xb, \rho_{k_nS}(\xb)\right)} \right\}^{-2}\left\{ \sum_{i \in U}  \1{i \in B\left(\xb, \rho_{k_nS}(\xb)\right)}\left(1 - 2\frac{I_i}{\pi_i} + \frac{I_i}{\pi_i^2}\right) Y_i^2\right.\\
                    &\qquad\qquad    + \left. \sum_{i \in U}\sum_{j \in U: i \neq j}  \1{i,j \in B\left(\xb, \rho_{k_nS}(\xb)\right)}\left(1-\frac{I_i}{\pi_i}-\frac{I_j}{\pi_j} + \frac{I_iI_j}{\pi_i\pi_j}\right) Y_iY_j  \right\}.
  \end{align}
  The second term is
\begin{align}
  D_2^2   &= \left\{ \sum_{j \in U}  \1{j \in B\left(\xb, \rho_{k_nS}(\xb)\right)} \left(1-\frac{I_i}{\pi_i}\right) \right\}^2
             \left\{ \sum_{j \in U}  \1{j \in B\left(\xb, \rho_{k_nS}(\xb)\right)} \right\}^{-2}\left\{ \sum_{j \in U}  \1{j \in B\left(\xb, \rho_{k_nS}(\xb)\right)} \frac{I_j}{\pi_j} \right\}^{-2} \\
             &\qquad\qquad \times\left\{ \sum_{i \in U}  \1{i \in B\left(\xb, \rho_{k_nS}(\xb)\right)}\frac{I_i}{\pi_i} Y_i  \right\}^2.
  \end{align}
  Condition~(C\ref{condition:y:bounded}) implies that
  \begin{align}
  D_2^2   &\leq M_1^2 \left\{ \sum_{j \in U}  \1{j \in B\left(\xb, \rho_{k_nS}(\xb)\right)} \left(1-\frac{I_i}{\pi_i}\right) \right\}^2
             \left\{ \sum_{j \in U}  \1{j \in B\left(\xb, \rho_{k_nS}(\xb)\right)} \right\}^{-2}\\
             &=  M_1^2 \left\{ \sum_{j \in U}  \1{j \in B\left(\xb, \rho_{k_nS}(\xb)\right)} \right\}^{-2}
             \left\{ \sum_{j \in U}  \1{j \in B\left(\xb, \rho_{k_nS}(\xb)\right)} \left(1-2\frac{I_i}{\pi_i}+\frac{I_i}{\pi_i^2}\right) \right.\\
             & \qquad\qquad \left.+ \sum_{i \in U}\sum_{j \in U: i \neq j}  \1{i,j \in B\left(\xb, \rho_{k_nS}(\xb)\right)}\left(1-\frac{I_i}{\pi_i}-\frac{I_j}{\pi_j} + \frac{I_iI_j}{\pi_i\pi_j}\right)\right\}.
  \end{align}
  Applying the law of iterated expectations, we obtain
  \begin{align}
    \E_{p}\left[ \left\{\widehat{m}^*_n(\xb) - \widehat{m}_n(\xb)\right\}^2\right]
        &\leq 2 \E_{p}\left( D_1^2 + D_2^2 \right) =  2 \E_{p}\left\{\E_{p} \left( D_1^2 + D_2^2 \right| \left.Q_n \right)\right\}\\
        &= 2 \E_{p}\left\{\E_{p} \left( D_1^2  \right| \left.Q_n \right)+ \E_{p} \left( D_2^2 \right| \left.Q_n \right)\right\}. \label{equation:law:iterated:expectations}
  \end{align}
  We will now introduce the random variables $r_{ij} = \E_p(I_iI_j|Q_n) - \pi_{ij}$ defined on page~\pageref{rij}. The inner expectation of the first term in Equation~\eqref{equation:law:iterated:expectations} can be rewritten as
  \begin{align}
    \E_{p} \left( D_1^2  \right| \left.Q_n \right) &= \left\{ \sum_{j \in U}  \1{j \in B\left(\xb, \rho_{k_nS}(\xb)\right)} \right\}^{-2}\left\{ \sum_{i \in U}  \1{i \in B\left(\xb, \rho_{k_nS}(\xb)\right)}\left(1 - 2\frac{r_{ii} + \pi_i}{\pi_i} + \frac{r_{ii} + \pi_i}{\pi_i^2}\right) Y_i^2\right.\\
                    &   + \left. \sum_{i \in U}\sum_{j \in U: i \neq j}  \1{i,j \in B\left(\xb, \rho_{k_nS}(\xb)\right)}\left(1-\frac{r_{ii} + \pi_i}{\pi_i}-\frac{r_{jj} + \pi_j}{\pi_j} + \frac{r_{ij} + \pi_{ij}}{\pi_i\pi_j}\right) Y_iY_j  \right\}\\
                    &= \left\{ \sum_{j \in U}  \1{j \in B\left(\xb, \rho_{k_nS}(\xb)\right)} \right\}^{-2}\left\{ \sum_{i \in U}  \1{i \in B\left(\xb, \rho_{k_nS}(\xb)\right)}\left(-1 - 2\frac{r_{ii}}{\pi_i} + \frac{r_{ii}}{\pi_i^2} +\frac{1}{\pi_i}\right) Y_i^2\right.\\
                    &   + \left. \sum_{i \in U}\sum_{j \in U: i \neq j}  \1{i,j \in B\left(\xb, \rho_{k_nS}(\xb)\right)}\left(-1-\frac{r_{ii}}{\pi_i}-\frac{r_{jj}}{\pi_j} + \frac{r_{ij}}{\pi_i\pi_j} + \frac{\pi_{ij}}{\pi_i\pi_j}\right) Y_iY_j  \right\}\\
                    &\leq \left\{ \sum_{j \in U}  \1{j \in B\left(\xb, \rho_{k_nS}(\xb)\right)} \right\}^{-2}\\
                    &\qquad \left\{ \sum_{i \in U}  \1{i \in B\left(\xb, \rho_{k_nS}(\xb)\right)}\left(\left|\frac{1}{\pi_i}-1\right| + \left| \frac{1}{\pi_i}\left(\frac{1}{\pi_i} - 2\right) \right|\left|r_{ii}\right| \right) Y_i^2\right.\\
                    &\qquad   + \left. \sum_{i \in U}\sum_{j \in U: i \neq j}  \1{i,j \in B\left(\xb, \rho_{k_nS}(\xb)\right)}\left(\left|\frac{\pi_{ij}}{\pi_i\pi_j} - 1\right| + \left|\frac{r_{ij}}{\pi_i\pi_j}-\frac{r_{ii}}{\pi_i}-\frac{r_{jj}}{\pi_j} \right|  \right) \left|Y_i\right|\left|Y_j\right|  \right\}.
  \end{align}
  Condition~(C\ref{condition:y:bounded}) and Lemma~\ref{lemma:knn:design:consistency} allow us to bound this quantity as follows
\begin{align}
  \E_{p} \left( D_1^2  \right| \left.Q_n \right) &\leq M_1^2\left\{ \sum_{j \in U}  \1{j \in B\left(\xb, \rho_{k_nS}(\xb)\right)} \right\}^{-2}\left\{ \left(M_2 + M_3\max\limits_{i,j\in U}|r_{ij}|\right)\sum_{i \in U}  \1{i \in B\left(\xb, \rho_{k_nS}(\xb)\right)} \right.\\
                    &   + \left. \left(\max\limits_{i,j\in U,i \neq j} \left|\frac{\pi_{ij}}{\pi_i \pi_j}-1 \right| + M_4\max\limits_{i,j\in U}|r_{ij}|  \right) \sum_{i \in U}\sum_{j \in U: i \neq j}  \1{i,j \in B\left(\xb, \rho_{k_nS}(\xb)\right)}   \right\}.
\end{align}
 Since $\sum_{j \in U}  \1{j \in B\left(\xb, \rho_{k_nS}(\xb)\right)} \geq k_n$ and $\sum_{i \in U}\sum_{j \in U, j\neq i} \1{i,j \in B\left(\xb, \rho_{k_nS}(\xb)\right)} \leq  N\sum_{i \in U}  \1{i \in B\left(\xb, \rho_{k_nS}(\xb)\right)}$, we can write
 \begin{align}
  \E_{p} \left( D_1^2  \right| \left.Q_n \right) &\leq M_1^2\left\{\frac{1}{k_n}  \left(M_2 + M_3\max\limits_{i,j\in U}|r_{ij}|\right) + \frac{N}{k_n}\left(\max\limits_{i,j\in U,i \neq j} \left|\frac{\pi_{ij}}{\pi_i \pi_j}-1 \right| + M_4\max\limits_{i,j\in U}|r_{ij}|  \right)\right\}.
\end{align}
Using similar computations, the inner expectation of the second term in Equation~\eqref{equation:law:iterated:expectations} can be bounded as follows
\begin{align}
  \E_{p} \left( D_2^2  \right| \left.Q_n \right) &\leq M_1^2\left\{\frac{1}{k_n}  \left(M_2 + M_3\max\limits_{i,j\in U}|r_{ij}|\right) + \frac{N}{k_n}\left(\max\limits_{i,j\in U,i \neq j} \left|\frac{\pi_{ij}}{\pi_i \pi_j}-1 \right| + M_4\max\limits_{i,j\in U}|r_{ij}|  \right)\right\}.
\end{align}
Inserting these two bounds into Equation~\eqref{equation:law:iterated:expectations} yields
  \begin{align}
    \E_{p}\left[ \left\{\widehat{m}^*_n(\xb) - \widehat{m}_n(\xb)\right\}^2\right]
        &\leq 4 M_1^2\left[\frac{1}{k_n}  \left\{M_2 + M_3\E_{p}\left(\max\limits_{i,j\in U}|r_{ij}|\right)\right\} \right.\\
        &\qquad +
        \left.\frac{N}{k_n}\left\{\max\limits_{i,j\in U,i \neq j} \left|\frac{\pi_{ij}}{\pi_i \pi_j}-1 \right| + M_4\E_{p}\left(\max\limits_{i,j\in U}|r_{ij}|\right)  \right\}\right].
  \end{align}
From Condition (C\ref{condition:I2}) the first term is $O(k_n^{-1})$ with $\xi$-probability one. From Conditions~(C\ref{condition:sampling:fraction}), (C\ref{condition:pi2}), and (C\ref{condition:I2}) the second term is also $O(k_n^{-1})$  with $\xi$-probability one. As a result \newline$\E_{p}\left[ \left\{\widehat{m}^*_n(\xb) - \widehat{m}_n(\xb)\right\}^2\right]$ is $O(k_n^{-1})$  with $\xi$-probability one.
\end{proof}

\end{document}